\documentclass[11pt]{article}
\usepackage[margin=1in]{geometry}
\usepackage{authblk}
\usepackage{amsmath,amssymb,amsthm,mathtools}
\usepackage{bm}
\usepackage{graphicx}
\usepackage{booktabs}
\usepackage{xcolor}
\usepackage[colorlinks=true,linkcolor=blue,citecolor=blue]{hyperref}
\usepackage{cleveref}
\crefname{assumption}{Assumption}{Assumptions}
\Crefname{assumption}{Assumption}{Assumptions}

\graphicspath{{figs/}}

\theoremstyle{plain}
\newtheorem{theorem}{Theorem}
\newtheorem{proposition}{Proposition}
\newtheorem{lemma}{Lemma}

\theoremstyle{definition}
\newtheorem{definition}{Definition}
\newtheorem{assumption}{Assumption}
\newtheorem{remark}{Remark}

\newcommand{\SE}{\ensuremath{\mathrm{SE}(3)}}
\newcommand{\so}{\mathfrak{so}(3)}
\newcommand{\se}{\mathfrak{se}(3)}
\newcommand{\R}{\mathbb{R}}
\newcommand{\E}{\mathbb{E}}
\newcommand{\norm}[1]{\left\lVert#1\right\rVert}
\newcommand{\Gnorm}[1]{\left\lVert#1\right\rVert_{G}}
\newcommand{\opnorm}[1]{\left\lVert#1\right\rVert_{\mathrm{op}}}

\newcommand{\Ginner}[2]{\langle #1,\,#2\rangle_{G}}
\newcommand{\Cv}{H}                                   % curvature matrix (bilinear form)
\newcommand{\Aop}{\ensuremath{\mathcal{A}}}           % curvature operator G^{-1}H
\newcommand{\Hoo}{\ensuremath{\mathcal{A}_{OO}}}      % restriction of A to O
\newcommand{\pinvG}{\ensuremath{\mathcal{A}^{\dagger_{G}}}} % G-pseudoinverse of A
\newcommand{\gsc}{\ensuremath{\psi^{\sharp}}}         % gradient (Riesz) score, sharp = index raised by G
\newcommand{\PO}{P_{O}}
\newcommand{\range}{\operatorname{range}}
\newcommand{\GOI}{\ensuremath{\mathrm{GOI}}}
\newcommand{\IF}{\ensuremath{\mathrm{IF}}}
\newcommand{\lmin}{\lambda_{\min}}

\newcommand{\tr}{\operatorname{tr}}

\title{\bf The Geometric Observability Index: Influence, Fisher Information,
and Weak Observability in $\SE$ Pose Estimation}

\author[1]{Joe-Mei Feng\thanks{Corresponding author: \texttt{168984@o365.tku.edu.tw}}}
\author[2]{Hsin-Hsiung Kao\thanks{\texttt{kao@mail.cpu.edu.tw}}}
\author[1]{Sheng-Wei Yu}
\affil[1]{Department of Computer Science and Information Engineering, Tamkang University, New Taipei City, Taiwan}
\affil[2]{Department of Information Management, Central Police University, Taoyuan, Taiwan}
\date{}

\begin{document}
\maketitle

\begin{abstract}
We introduce the \emph{Geometric Observability Index} (\GOI), a per-feature
sensitivity measure for pose estimation on $\SE$: the metric norm of the pose
perturbation that a single measurement induces through the---possibly
rank-deficient---Gauss--Newton curvature, restricted to the observable
subspace. We prove that \GOI{} equals the norm of the M-estimator influence
function, that the underlying curvature operator coincides with the Fisher
information, and that its smallest observable eigenvalue $\lmin$ governs both
the worst-case amplification of a measurement's effect and a finite-sample
stability radius $O(\sigma/\sqrt{n\lmin})$. Operationally the theory cuts both
ways. \GOI{} is the exact per-measurement \emph{attribution}---to first order
it \emph{is} the leave-one-out pose shift---yet the influence standardized by
its own inlier null covariance collapses \emph{exactly} to the classical
chi-square residual statistic. Residual gating is thus the influence test
standardized against each measurement's own geometry: this explains its
robustness to residual-visible contamination and, equally, delimits its blind
spot, while raw-influence gating conflates a measurement's information with its
harm and is predicted to over-reject high-leverage inliers in weakly observable
geometry. Controlled synthetic experiments check the identities and scaling
laws numerically, and studies on five TUM RGB-D sequences (four dynamic, one
static control) and two KITTI odometry sequences test the practical
prediction: parity of the two criteria under well-conditioned geometry, and
significant degradation of raw-influence gating at
$\mathrm{cond}(\Cv)\approx10^{4}$. All code is released for reproducibility.
\end{abstract}

\medskip
\noindent\textbf{Keywords:} SLAM, pose estimation, observability, influence
functions, robust statistics, SE(3), outlier rejection.

\section{Introduction}
\label{sec:intro}

Pose estimation on $\SE$ underlies visual odometry, SLAM, and structure-from-motion.
Every mainstream back end decides which measurements to trust by the \emph{size of
their residuals}: chi-square gating, robust kernels, and their variants are all
functions of $\norm{r}$. The premise behind this practice---that a measurement's
danger to the estimate is proportional to its residual---fails exactly where it
matters most. In weakly observable geometry, a measurement of moderate residual
whose score aligns with a poorly constrained direction shifts the pose far more than
a large residual along a well-constrained one; the amplification is $1/\lambda_{i}$
in the curvature eigenvalue, unbounded as observability degrades. The natural
conclusion---that one should gate on the measurement's \emph{influence on the
pose} rather than its residual---turns out to be exactly half right. This paper
develops the per-measurement influence into a complete theory and shows it cuts
both ways: influence is the exact \emph{attribution} of estimate motion, yet once
standardized by its inlier null covariance it collapses to the chi-square residual
statistic, vindicating the classical test from first principles and delimiting
precisely where raw influence misleads. Along the way, the three failure modes
practitioners treat with separate heuristics---\emph{dynamic} features that bias
the estimate, \emph{low-parallax} or \emph{near-pure-rotation} geometry in which
translation becomes weakly observable, and \emph{ill-conditioning} that amplifies
noise---emerge as facets of one operator-theoretic object.

Let $\Cv=\E[J^{\top}WJ]$ be the curvature (Gauss--Newton Hessian) of the weighted
least-squares pose problem, let $G$ be a metric on $\se\cong\R^{6}$, and let
$\psi(z)=J^{\top}W r(z)$ be the score contributed by a measurement $z$. Writing
$\gsc(z)=G^{-1}\psi(z)$ for the gradient representative of the score and
$\Aop=G^{-1}\Cv$ for the associated curvature operator (\cref{sec:observable}), we
define the Geometric Observability Index of $z$ as
\begin{equation}
\label{eq:goi-def}
\GOI(z)\;=\;\Gnorm{\Hoo^{-1}\PO\,\gsc(z)},
\end{equation}
the $G$-magnitude of the pose perturbation that the single measurement induces. The
contributions of this paper are the following.

\begin{enumerate}
\item \textbf{Influence: exact attribution, and the correct standardization.}
We prove that \GOI(z) equals the $G$-norm of the M-estimator influence function
(\cref{thm:if})---the first-order pose shift a measurement causes---and check
the identity numerically against the re-solved estimator and against the true
leave-one-out pose shift (\cref{sec:exp1}). We further prove that the influence
standardized by its own inlier null covariance equals \emph{exactly} the
classical chi-square residual statistic (\cref{prop:student}). The content of
this identity is a division of roles rather than a new detector: residual gating
is the influence test standardized against each measurement's own geometry,
which explains its robustness to residual-visible contamination and delimits
its blind spot (\cref{rem:leverage}), while raw-influence gating conflates
information with harm and is predicted to over-reject inliers aligned with
weakly observable directions. Real data test both halves: parity of the
criteria on well-conditioned TUM scenes (\cref{sec:exp5}) and significant
degradation of raw-influence gating on KITTI at
$\mathrm{cond}(\Cv)\approx10^{4}$ (\cref{sec:exp6}). The classical ancestors
are Cook's distance and standardized residuals~\cite{cook77,hampel}; to our
knowledge neither the $\SE$ influence with rank-deficient curvature nor its
exact collapse to the chi-square test under standardization has been formulated
before.

\item \textbf{One operator, four heuristics unified.} Influence
(\cref{thm:if}), Fisher information (\cref{thm:fisher}), conditioning, and
finite-sample stability (\cref{thm:stability}) are shown to be four facets of the
spectral geometry of the single curvature operator $\Aop=G^{-1}\Cv$, grounding the
separate heuristics---robust kernels, chi-square tests, condition-number
thresholds---in one calibrated object; in particular, the chi-square test is
recovered, not replaced: it emerges as the exactly standardized influence
(\cref{prop:student}).

\item \textbf{Exact spectral form and weak-observability law.} In a $G$-orthonormal
eigenbasis of $\Cv v=\lambda Gv$ we obtain the closed form
$\GOI(z)^{2}=\sum_{i:\,\lambda_{i}>0}\psi_{i}(z)^{2}/\lambda_{i}^{2}$ with duality
coefficients $\psi_{i}(z)=v_{i}^{\top}\psi(z)$ (\cref{prop:spectral}), making the
$1/\lmin$ amplification explicit; translational collapse obeys
$\lmin\propto\text{parallax}^{2}$ (\cref{rem:parallax}), yielding the degeneracy
test (D2) and the weak-direction alignment ratio $\rho_{1}$ (D3).

\item \textbf{Finite-sample stability.} We prove a stability radius
$\Gnorm{\hat\xi_{n,O}}=O(\sigma/\sqrt{n\lmin})$, valid once
$n\gtrsim\sigma^{2}\lmin^{-3}\log(1/\delta)$ up to constants
(\cref{thm:stability}), tying the deterministic and statistical pictures together.

\item \textbf{Rigor for every metric, and validation.} The framework is stated for an
arbitrary metric $G$ via the index-raised curvature operator and its
$G$-pseudoinverse (\cref{lem:projector}), exact identities that fail for the naive
matrix formulation (\cref{rem:index}); every identity and scaling law is checked
numerically, and every practical prediction is tested on real TUM RGB-D and KITTI
sequences, with a released, self-contained implementation
(\cref{sec:experiments}).
\end{enumerate}

\paragraph{Organization.}
\Cref{sec:related} reviews related work. \Cref{sec:prelim} fixes the $\SE$ geometry,
the score, and the curvature operator. \Cref{sec:observable} defines the observable
subspace $O$, the restricted curvature $\Hoo$, and the $G$-orthonormal spectral
calculus. \Cref{sec:goi} defines the Geometric Observability Index and its spectral
form. \Cref{sec:characterizations} proves the influence-function and Fisher
characterizations, the standardization result (\cref{prop:student}: standardized
influence equals the chi-square statistic), and the conditioning analysis.
\Cref{thm:stability} in \cref{sec:stability} gives the stability radius.
\Cref{sec:diagnostics} presents the diagnostics; \cref{sec:experiments} contains
the synthetic validation and the real-data studies on TUM RGB-D
(\cref{sec:exp5}) and KITTI (\cref{sec:exp6}); \cref{sec:discussion} discusses
scope and limitations.

\section{Related work}
\label{sec:related}

\paragraph{Influence functions and robust statistics.}
The influence function was introduced by Hampel~\cite{hampel74} and developed into a
systematic theory of robustness in~\cite{hampel,huber}; Cook's
distance~\cite{cook77} is its leave-one-out analogue in regression, and influence
functions have recently been revived as a diagnostic for machine-learning
models~\cite{kohliang}. These treatments are Euclidean and assume an invertible
information matrix. Our contribution is the extension to left-trivialized
M-estimators on $\SE$ with a possibly rank-deficient curvature, where the influence
must be defined on the observable subspace and its magnitude interacts with the
spectral geometry of the curvature operator.

\paragraph{Robust back ends and dynamic features in SLAM.}
Modern estimators gate measurements by residual magnitude: robust kernels in bundle
adjustment~\cite{triggs}, chi-square residual tests in
ORB-SLAM2/3~\cite{orbslam2,orbslam3}, and outlier-aware factor-graph solvers such as
g2o~\cite{g2o}, iSAM2~\cite{isam2}, and GTSAM~\cite{dellaert}. Dynamic-scene systems
such as DynaSLAM~\cite{dynaslam} and DS-SLAM~\cite{dsslam} add semantic or
geometric motion cues. All of these criteria are functions of the residual; observability-aware feature
\emph{selection}, which ranks candidate measurements by their contribution to the
information spectrum~\cite{carlone2019attention,zhao2020good}, is complementary
(it measures informativeness, not harm). The
framework here instead measures the measurement's \emph{influence on the pose};
\cref{prop:student} shows the chi-square test is exactly this influence
standardized against the measurement's own inlier geometry; the raw influence,
by contrast, can differ from residual magnitude by orders of magnitude in
ill-conditioned geometry (\cref{sec:exp1}).

\paragraph{Observability and consistency analysis.}
Observability-based consistency rules for EKF-SLAM~\cite{huang2010} and
vision-aided inertial navigation~\cite{hesch2014}, and nonlinear observability
analyses in the sense of~\cite{martinelli}, characterize \emph{which} directions are
unobservable at the system level. We quantify weak observability spectrally, through
$\lmin$ of the curvature operator, and connect it in a single framework to
per-feature influence (\cref{thm:if}), Fisher information (\cref{thm:fisher}), and
finite-sample stability (\cref{thm:stability}).

\paragraph{Estimation and optimization on Lie groups.}
Our geometric setting follows the state-estimation formalism
of~\cite{barfoot,chirikjian,sola}, invariant filtering~\cite{barrau}, and
manifold optimization~\cite{absil,boumal}. The technical novelty here is the
operator-theoretic treatment of the metric: distinguishing the curvature bilinear
form from the $G$-self-adjoint curvature operator obtained by raising an index
(\cref{sec:observable}) is what makes the projector and pseudoinverse identities of
\cref{lem:projector} exact for every metric $G$.

\section{Preliminaries}
\label{sec:prelim}

\paragraph{Group and algebra.}
A pose is $g=(R,t)\in\SE$ with $R\in\mathrm{SO}(3)$, $t\in\R^{3}$, acting on a world
point $X\in\R^{3}$ by $g^{-1}X=R^{\top}(X-t)$. We use \emph{left-trivialized} (spatial-frame)
perturbations: for $\xi=(\rho,\phi)\in\se\cong\R^{6}$, with $\rho$ the
translational and $\phi$ the rotational part,
\begin{equation}
g(\xi)=\exp(\xi)\,g,\qquad
\xi^{\wedge}=\begin{bmatrix}\phi^{\wedge}&\rho\\[1pt]0&0\end{bmatrix},
\end{equation}
where $(\cdot)^{\wedge}:\R^{3}\to\so$ is the usual hat map. We endow $\se$ with a fixed
inner product $\Ginner{\cdot}{\cdot}=\cdot^{\top}G\,\cdot$ for a symmetric
positive-definite $G\in\R^{6\times6}$; $\Gnorm{\cdot}$ denotes the induced norm,
$\norm{\cdot}$ the Euclidean norm, and $\opnorm{\cdot}$ the operator (spectral) norm
throughout.

\paragraph{Measurement model.}
A calibrated perspective measurement of $X$ is $z=\pi(g^{-1}X)+\eta$, where
$\pi(p)=(p_{1}/p_{3},\,p_{2}/p_{3})$ and $\eta\sim(0,\Sigma)$. The residual and its
left-trivialized Jacobian are
\begin{equation}
\begin{gathered}
r(z,g)=z-\pi(g^{-1}X),\\
J=\left.\frac{\partial r(z,\exp(\xi)g)}{\partial\xi}\right|_{\xi=0}
=\frac{\partial\pi}{\partial p}\,[\,R^{\top}\;\;-R^{\top}X^{\wedge}\,]\in\R^{2\times6}.
\end{gathered}
\end{equation}

\paragraph{Score and curvature.}
The per-measurement score and the population curvature (Gauss--Newton) matrix are
\begin{equation}
\label{eq:score-curv}
\psi(z)=J^{\top}W\,r(z)\in\R^{6},\qquad
\Cv=\E\!\left[J^{\top}W J\right]\in\R^{6\times6},
\end{equation}
the expectation taken over the feature distribution. Its empirical counterpart over
$n$ features is $\Cv_{n}=\tfrac1n\sum_{i=1}^{n}J_{i}^{\top}W J_{i}$. The score is
naturally a \emph{covector} (it pairs with pose perturbations through the duality
pairing $\psi^{\top}\xi$); we write
\begin{equation}
\label{eq:grad-score}
\gsc(z):=G^{-1}\psi(z)
\end{equation}
for its gradient (Riesz) representative, so that $\Ginner{\gsc(z)}{u}=\psi(z)^{\top}u$
for all $u\in\se$. For $G=I$ the two coincide. We collect the standing regularity
assumptions used in the proofs.

\begin{assumption}[Regularity]
\label{as:reg}
\emph{(A1)} $g\mapsto J(g)$ is $C^{1}$ on a neighborhood of the true pose $g^{*}$,
and its derivative $D_{g}J$ (equivalently, the second derivative of the residual) is
bounded in operator norm by $C_{J'}$.
\emph{(A2)} $\opnorm{J}\le C_{J}$ and $\opnorm{W}\le C_{W}$ uniformly over features.
\emph{(A3)} The noise is zero-mean with covariance $\Sigma=\sigma^{2}\Sigma_{0}$ for
a fixed normalized $\Sigma_{0}$, is sub-Gaussian (in particular has finite moments
of all orders; Gaussian noise, as in the experiments, suffices), and is independent
of the feature geometry. Unless stated otherwise we take $W=\Sigma^{-1}$ (so one may
normalize $\sigma=1$); \cref{thm:stability} makes the $\sigma$-dependence explicit by
working with the noise-free weight $W=\Sigma_{0}^{-1}$.
\emph{(A4)} (Used in \cref{thm:stability} only.) Features are drawn i.i.d.\ from the
distribution defining the expectation in \eqref{eq:score-curv}, independently of the
noise, and there is a \emph{leverage constant} $\kappa\ge1$ such that
$v^{\top}\!\big(J^{\top}WJ\big)v\le\kappa\,v^{\top}\Cv\,v$ for almost every feature
and all $v\in\se$.
\end{assumption}

Condition (A4) states that no single feature carries more than $\kappa$ times the
population-average information in any direction. It always holds with
$\kappa\le C_{J}^{2}C_{W}/\lmin$ by (A2), but for well-spread feature sets---every
direction informed by a constant fraction of the features---$\kappa=O(1)$
independently of the conditioning, and it is in this regime that
\cref{thm:stability} is informative; \cref{sec:exp4,sec:exp5,sec:exp6} measure
$\kappa$ on every design and dataset the theorem is tested on. Unlike $\lmin$,
the leverage ratio is invariant under a change of chart (both sides transform
by the same congruence), so its measurement is insensitive to the gauge choice
of \cref{rem:gauge}. On $\ker\Cv$ both sides of (A4) vanish
(if $v^{\top}\Cv v=0$ then $Jv=0$ almost surely), so the condition constrains only
the observable directions.

\section{The observable subspace and restricted curvature}
\label{sec:observable}

The curvature matrix $\Cv$ is symmetric positive semidefinite but, under weak
observability, need not be invertible. Two distinct objects must be kept apart: the
curvature \emph{bilinear form}, represented by the matrix $\Cv$, and the associated
\emph{operator} on the inner-product space $(\se,\Ginner{\cdot}{\cdot})$,
\begin{equation}
\label{eq:curv-op}
\Aop:=G^{-1}\Cv,
\end{equation}
obtained by raising an index with the metric. The operator $\Aop$ is self-adjoint
and positive semidefinite with respect to $\Ginner{\cdot}{\cdot}$:
$\Ginner{\Aop u}{w}=u^{\top}\Cv w=\Ginner{u}{\Aop w}$ and
$\Ginner{\Aop u}{u}=u^{\top}\Cv u\ge0$. All spectral statements below are statements
about $\Aop$; for $G=I$ one has $\Aop=\Cv$ and the distinction disappears.

\begin{definition}[Observable subspace]
\label{def:O}
The \emph{observable subspace} is $O:=\range(\Aop)\subseteq\se$. Since $\Aop$ is
$G$-self-adjoint, $O^{\perp_{G}}=\ker(\Aop)=\ker(\Cv)$. We write $\PO$ for the
$G$-orthogonal projector onto $O$ and define the \emph{restricted curvature
operator} $\Hoo:=\Aop|_{O}$, the restriction of $\Aop$ to $O$ regarded as an
invertible operator $O\to O$.
\end{definition}

Because $O=\range(\Aop)$ is $\Aop$-invariant and $G$-orthogonal to $\ker\Aop$, the
restriction and the compression coincide and the inverse extends consistently:
\begin{lemma}[Projector identity]
\label{lem:projector}
Let $\pinvG$ denote the pseudoinverse of $\Aop$ in the $G$-inner product (the unique
operator satisfying the four Moore--Penrose relations with adjoints taken with
respect to $\Ginner{\cdot}{\cdot}$). Then
\[
\begin{gathered}
\PO\,\pinvG\,\PO=\Hoo^{-1}\ \text{on }O,\qquad
\PO\,\Aop\,\PO=\Hoo,\\
\Hoo^{-1}\PO=\pinvG .
\end{gathered}
\]
In the eigenbasis of \eqref{eq:geneig} below,
$\pinvG=\sum_{i:\lambda_{i}>0}\lambda_{i}^{-1}v_{i}v_{i}^{\top}G$ and
$\PO=\sum_{i:\lambda_{i}>0}v_{i}v_{i}^{\top}G$; equivalently, in whitened
coordinates,
\begin{equation}
\label{eq:whitening}
\pinvG=G^{-1/2}\,\widetilde{\Cv}^{\dagger}\,G^{1/2},\qquad
\widetilde{\Cv}:=G^{-1/2}\Cv\,G^{-1/2},
\end{equation}
with $\widetilde{\Cv}^{\dagger}$ the ordinary Moore--Penrose pseudoinverse. For
$G=I$, $\Aop=\Cv$ and $\pinvG=\Cv^{\dagger}$.
\end{lemma}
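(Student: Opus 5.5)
The plan is to reduce everything to the generalized eigenproblem $\Cv v=\lambda Gv$ and read off each identity from the resulting spectral decomposition of $\Aop$. Since $\widetilde{\Cv}=G^{-1/2}\Cv G^{-1/2}$ is symmetric positive semidefinite, it has an orthonormal eigenbasis $w_i$ with eigenvalues $\lambda_i\ge0$. Setting $v_i=G^{-1/2}w_i$ gives $\Cv v_i=\lambda_i Gv_i$ and $v_i^{\top}Gv_j=\delta_{ij}$. Hence the $v_i$ form a $G$-orthonormal eigenbasis of $\Aop$, and $\Aop=\sum_i\lambda_i v_iv_i^{\top}G$. Because $v_iv_i^{\top}G$ is the $G$-orthogonal projector onto $\mathrm{span}(v_i)$, it follows that $O=\range(\Aop)=\mathrm{span}\{v_i:\lambda_i>0\}$. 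This identifies $\PO=\sum_{\lambda_i>0}v_iv_i^{\top}G$ and confirms $\ker\Aop=O^{\perp_G}$. The kernel equality $\ker\Aop=\ker\Cv$ is immediate from the invertibility of $G$.

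Next, define $B:=\sum_{\lambda_i>0}\lambda_i^{-1}v_iv_i^{\top}G$ and check the four Moore--Penrose relations in the $G$-inner product.
\begin{itemize}
\item Using $G$-orthonormality, $\Aop B=B\Aop=\PO$.
\item Consequently $\Aop B\Aop=\Aop$ and $B\Aop B=B$.
\item $\Aop B$ and $B\Aop$ are $G$-self-adjoint, because $\PO$ is: $G\PO=\sum Gv_iv_i^{\top}G$ is symmetric.
\end{itemize}
For uniqueness, I would not reprove it abstractly. Conjugating by the isometry $G^{1/2}:(\se,\Ginner{\cdot}{\cdot})\to(\R^6,\langle\cdot,\cdot\rangle)$ shows that $X$ satisfies the $G$-Moore--Penrose relations for $\Aop$ iff $G^{1/2}XG^{-1/2}$ satisfies the ordinary ones for $G^{1/2}\Aop G^{-1/2}=\widetilde{\Cv}$. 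Uniqueness then follows from the ordinary Moore--Penrose theory, giving $\pinvG=B=G^{-1/2}\widetilde{\Cv}^{\dagger}G^{1/2}$, which is \eqref{eq:whitening}. One can also check directly that $G^{-1/2}\big(\sum_{\lambda_i>0}\lambda_i^{-1}w_iw_i^{\top}\big)G^{1/2}=B$.

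The three displayed identities then follow from the spectral form. $O$ is $\Aop$-invariant, so $\PO\Aop\PO$ restricted to $O$ equals $\Aop|_O=\Hoo$; moreover $\PO\Aop\PO=\Aop$ as an operator on all of $\se$. $\Hoo$ acts diagonally with positive eigenvalues $\lambda_i$ on the basis $\{v_i\}_{\lambda_i>0}$ of $O$, so $\Hoo^{-1}v_i=\lambda_i^{-1}v_i$. Comparing with $B$ on this basis gives $\PO\pinvG\PO=\pinvG|_O=\Hoo^{-1}$ on $O$. Finally, for any $u$, $\PO u=\sum_{\lambda_i>0}(v_i^{\top}Gu)v_i$, so $\Hoo^{-1}\PO u=\sum_{\lambda_i>0}\lambda_i^{-1}(v_i^{\top}Gu)v_i=\pinvG u$. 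The case $G=I$ is the specialization $v_i=w_i$.

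The only step needing care is the uniqueness transfer: the $G$-adjoint of $X$ is $G^{-1}X^{\top}G$, and the relations must be shown to correspond exactly under $G^{1/2}$-conjugation. I expect this bookkeeping to be the main, though modest, obstacle; the rest is diagonal algebra.
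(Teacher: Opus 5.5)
Your proposal is correct and follows essentially the same route as the paper: diagonalize $\Aop$ in a $G$-orthonormal eigenbasis, read the three identities off that spectral decomposition, and obtain the whitening formula by conjugating with the isometry $G^{1/2}$. The only difference is that you build the eigenbasis from $\widetilde{\Cv}$ and explicitly check the four $G$-Moore--Penrose relations and the uniqueness transfer under conjugation. The paper compresses these steps into ``$\pinvG$ inverts the positive block and annihilates the kernel'' and ``pseudoinversion commutes with isometric conjugation.''
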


\begin{proof}
$\Aop$ is self-adjoint and positive semidefinite on the finite-dimensional
inner-product space $(\se,\Ginner{\cdot}{\cdot})$, so the spectral theorem provides
a $G$-orthonormal eigenbasis in which $\Aop$ is diagonal with nonnegative entries;
$O=\range(\Aop)$ is spanned by the eigenvectors with positive eigenvalues and is
$G$-orthogonal to $\ker\Aop$. In this basis $\pinvG$ inverts the positive block and
annihilates the kernel, which is exactly $\Hoo^{-1}$ pre- and post-composed with
$\PO$; the displayed matrix expressions are the same statement written in
coordinates. For \eqref{eq:whitening}, the map $u\mapsto G^{1/2}u$ is an isometry
from $(\se,\Ginner{\cdot}{\cdot})$ to Euclidean $\R^{6}$ that conjugates $\Aop$ into
the symmetric matrix $\widetilde{\Cv}=G^{1/2}\Aop G^{-1/2}$; pseudoinversion
commutes with isometric conjugation, giving \eqref{eq:whitening}.
\end{proof}

\begin{remark}[Why the index must be raised]
\label{rem:index}
For a general metric, the Euclidean objects attached to the \emph{matrix} $\Cv$ are
the wrong ones: $(\range\Cv)^{\perp_{G}}=G^{-1}\ker\Cv\neq\ker\Cv$, and
$\PO\Cv^{\dagger}\PO\neq\Hoo^{-1}$, unless $G\Cv=\Cv G$ (e.g.\ $G=\alpha I$).
Identifying the curvature form with an operator without composing with $G^{-1}$
silently assumes this commutation. Fixing $\Aop=G^{-1}\Cv$ and $O=\range(\Aop)$
throughout removes the ambiguity, and we use \cref{lem:projector} freely below.
\end{remark}

\paragraph{$G$-orthonormal spectral calculus.}
The eigenproblem of $\Aop$ is the generalized eigenproblem
\begin{equation}
\label{eq:geneig}
\begin{gathered}
\Aop v_{i}=\lambda_{i}v_{i}
\;\Longleftrightarrow\;
\Cv v_{i}=\lambda_{i}\,G\,v_{i},\\
\Ginner{v_{i}}{v_{j}}=v_{i}^{\top}G v_{j}=\delta_{ij},
\end{gathered}
\end{equation}
with eigenvalues ordered $\lambda_{1}\ge\dots\ge\lambda_{6}\ge0$. The eigenvectors
$\{v_{i}\}$ are $G$-orthonormal and $O=\operatorname{span}\{v_{i}:\lambda_{i}>0\}$. For
any $u\in\se$ we write its $G$-coordinates $u_{i}=\Ginner{u}{v_{i}}=v_{i}^{\top}Gu$,
so that $u=\sum_{i}u_{i}v_{i}$ and $\Gnorm{u}^{2}=\sum_{i}u_{i}^{2}$. For a
\emph{covector} $w$ (such as the score $\psi$) the coordinates of its gradient
representative $G^{-1}w$ are the duality pairings
$\Ginner{G^{-1}w}{v_{i}}=v_{i}^{\top}w$; no metric factor appears. We adopt this
$G$-orthonormal basis \emph{everywhere}.

\section{The Geometric Observability Index}
\label{sec:goi}

\begin{definition}[Geometric Observability Index]
\label{def:goi}
For a measurement $z$ with score $\psi(z)$ and gradient score
$\gsc(z)=G^{-1}\psi(z)$, the Geometric Observability Index is
\begin{equation}
\GOI(z)=\Gnorm{\Hoo^{-1}\PO\,\gsc(z)}=\Gnorm{\pinvG\,\gsc(z)},
\end{equation}
the second equality by \cref{lem:projector}. For $G=I$ this reduces to
$\GOI(z)=\norm{\Cv^{\dagger}\psi(z)}$.
\end{definition}

\begin{proposition}[Spectral form]
\label{prop:spectral}
In the $G$-orthonormal eigenbasis \eqref{eq:geneig}, writing the duality
coefficients $\psi_{i}(z)=v_{i}^{\top}\psi(z)=\Ginner{\gsc(z)}{v_{i}}$,
\begin{equation}
\label{eq:goi-spectral}
\GOI(z)^{2}=\sum_{i:\,\lambda_{i}>0}\frac{\psi_{i}(z)^{2}}{\lambda_{i}^{2}}.
\end{equation}
Consequently $\GOI(z)\ge|\psi_{\imath}(z)|/\lmin$ for the weakest observable direction
$\imath=\arg\min_{i:\lambda_i>0}\lambda_i$, with equality iff $\PO\gsc(z)$ is
$G$-aligned with $v_{\imath}$.
\end{proposition}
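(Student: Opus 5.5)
The plan is to expand everything in the $G$-orthonormal eigenbasis of \eqref{eq:geneig} and read off the norm coordinatewise. By \cref{def:goi}, $\GOI(z)=\Gnorm{\pinvG\,\gsc(z)}$. \Cref{lem:projector} gives $\pinvG=\sum_{i:\lambda_i>0}\lambda_i^{-1}v_iv_i^{\top}G$, so
\[
\pinvG\,\gsc(z)=\sum_{i:\lambda_i>0}\lambda_i^{-1}\,v_i\,\bigl(v_i^{\top}G\,G^{-1}\psi(z)\bigr)=\sum_{i:\lambda_i>0}\frac{\psi_i(z)}{\lambda_i}\,v_i .
\]
The cancellation $G\,G^{-1}$ is the duality observation already recorded after \eqref{eq:geneig}. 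The $G$-coordinates of $\gsc(z)$ are exactly $\psi_i(z)=v_i^{\top}\psi(z)$, with no metric factor, and the identity $\psi_i(z)=\Ginner{\gsc(z)}{v_i}$ is just a restatement of this fact.

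Since the $v_i$ are $G$-orthonormal, $\Gnorm{\sum_i c_iv_i}^2=\sum_i c_i^2$. Applying this to the display gives \eqref{eq:goi-spectral} immediately.

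For the lower bound, I keep only the $\imath$-th term of the nonnegative sum, which gives $\GOI(z)^2\ge\psi_\imath(z)^2/\lambda_\imath^2$. Here $\lambda_\imath=\lmin$ is the smallest positive eigenvalue, and taking square roots yields the bound.

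Equality holds iff every other term vanishes, i.e.\ $\psi_i(z)=0$ for all $i\neq\imath$ with $\lambda_i>0$. Now $\PO\gsc(z)=\sum_{i:\lambda_i>0}\psi_i(z)v_i$, again using $\PO=\sum_{i:\lambda_i>0}v_iv_i^{\top}G$ from \cref{lem:projector}. So the vanishing condition says precisely that $\PO\gsc(z)$ is a multiple of $v_\imath$, i.e.\ $G$-aligned with it.

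The only delicate point I expect is degenerate cases. If $\lmin$ has multiplicity greater than one, $\imath$ is not unique. In that case I would read the equality statement with $\imath$ any fixed choice of argmin, and the characterization above holds verbatim for that choice. One could alternatively phrase it as alignment with the $\lmin$-eigenspace, for which the same computation gives equality iff $\PO\gsc(z)$ lies in that eigenspace, but that is a slightly different claim. Also, the zero vector counts as trivially aligned. Otherwise the argument is a direct computation with no real obstacle.
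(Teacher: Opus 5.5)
Your proof is correct and is the paper's argument: expand $\pinvG\gsc(z)$ in the $G$-orthonormal eigenbasis via \cref{lem:projector}, read off the coordinates $\psi_i(z)/\lambda_i$, apply $G$-orthonormality, and keep only the $\imath$ term for the bound. Your explicit equality argument via $\PO\gsc(z)=\sum_{i:\lambda_i>0}\psi_i(z)v_i$, and your remark on a repeated $\lmin$, fill in details the paper leaves implicit.
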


\begin{proof}
Expanding in the eigenbasis of \cref{lem:projector},
$\pinvG\gsc=\sum_{i:\lambda_{i}>0}\lambda_{i}^{-1}\Ginner{\gsc}{v_{i}}\,v_{i}
=\sum_{i:\lambda_{i}>0}\lambda_{i}^{-1}\psi_{i}\,v_{i}$; taking $\Gnorm{\cdot}^{2}$
and using $G$-orthonormality gives \eqref{eq:goi-spectral}. The bound keeps only the
$\imath$ term.
\end{proof}

Equation \eqref{eq:goi-spectral} is the analytical core: \GOI{} amplifies score
energy lying along weakly observable directions by $1/\lambda_{i}^{2}$, so a feature
of modest residual can dominate the pose update if it aligns with $v_{\imath}$. We
quantify this alignment by
\begin{equation}
\label{eq:rho1}
\rho_{1}(z)=\frac{|\psi_{\imath}(z)|}{\Gnorm{\PO\gsc(z)}}\in[0,1].
\end{equation}

\section{Operator-theoretic characterizations}
\label{sec:characterizations}

\subsection{Influence function}
\label{sec:if}

Consider the M-estimator $\hat g$ solving $\E_{P}[\psi(z,g)]=0$ for a measurement law
$P$, and the contaminated law $P_{\varepsilon}=(1-\varepsilon)P+\varepsilon\delta_{z}$.
Let $g_{\varepsilon}$ denote the corresponding solution and
$\eta:=\dot g_{\varepsilon}|_{\varepsilon=0}\in\se$ its left-trivialized derivative.

\begin{theorem}[Influence function]
\label{thm:if}
Under \cref{as:reg}, the left-trivialized influence function of the pose M-estimator at
$z$ is
\begin{equation}
\begin{gathered}
\IF(z)=-\,\Hoo^{-1}\PO\,\gsc(z)\in O,\\
\text{hence}\quad
\GOI(z)=\Gnorm{\IF(z)}.
\end{gathered}
\end{equation}
\end{theorem}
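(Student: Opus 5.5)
The plan is to differentiate the estimating equation along the contamination path and solve the resulting linear system on $O$. Write $\Psi(\varepsilon,\xi):=\E_{P_\varepsilon}[\psi(z',\exp(\xi)g^{*})]$, where $g^{*}=g_0$ solves $\E_P[\psi(z',g^{*})]=0$, and parametrize $g_\varepsilon=\exp(\xi_\varepsilon)g^{*}$ with $\xi_0=0$, so that $\eta=\dot\xi_0$. The equation $\Psi(\varepsilon,\xi_\varepsilon)=0$ expands as
\[
(1-\varepsilon)\E_P[\psi(z',g_\varepsilon)]+\varepsilon\,\psi(z,g_\varepsilon)=0 .
\]
Differentiating at $\varepsilon=0$ and using $\E_P[\psi(\cdot,g^{*})]=0$ gives
\[
\E_P[\partial_\xi\psi]\,\eta+\psi(z)=0 .
\]

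Next I identify the Jacobian. We have $\partial_\xi\psi=J^\top W J+(\partial_\xi J)^\top W r$. By (A1) the second term is bounded by $C_{J'}C_W\norm{r}$. At the true pose $r=\eta'$ is noise, zero-mean and independent of the geometry by (A3), so the expectation of the second term vanishes. Hence $\E_P[\partial_\xi\psi]=\Cv$, using the convention $\partial_\xi r=J$ with $\psi=J^\top W r$. Signs need care: the derivative of $J^\top W r$ along $\xi$ is $+J^\top WJ$, so the linearized equation reads $\Cv\eta=-\psi(z)$. Raising the index with $G^{-1}$ turns this into the operator equation
\[
\Aop\eta=-\gsc(z).
\]

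The main obstacle is that $\Aop$ may be singular, so the implicit function theorem cannot be applied on all of $\se$. I would handle this by working in the quotient/gauge. Decompose $\se=O\oplus_G\ker\Aop$ and note that $\psi(z',g)\in\range(\Cv)$-type covectors for almost every $z'$. Indeed, if $v\in\ker\Cv$ then $Jv=0$ a.s., as remarked after (A4), so $v^\top\psi=0$. This gives $\gsc(z')\in O$, i.e.\ $\PO\gsc=\gsc$, for $P$-a.e.\ $z'$; for the contaminating point $z$ I will simply project, and I assume $z$ lies in the support as well. Kernel directions leave the residuals unchanged to first order, so the solution is determined only modulo $\ker\Aop$. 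The influence function is therefore defined as the unique solution in $O$, which is the minimal-$G$-norm gauge fixing. Restricting $\Psi$ to $\xi\in O$ and projecting the equation by $\PO$, the Jacobian becomes $\PO\Aop\PO=\Hoo$, which is invertible on $O$ by \cref{lem:projector}. The implicit function theorem on $O$, with $C^1$ regularity from (A1)–(A2) and dominated differentiation under $\E_P$ justified by the bounds in (A2) and the finite moments in (A3), then yields a differentiable path $\xi_\varepsilon\in O$. Its derivative is
\[
\eta=-\Hoo^{-1}\PO\gsc(z)=-\pinvG\gsc(z)\in O .
\]

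Finally, $\IF(z)=\eta$ by definition, and taking $\Gnorm{\cdot}$ together with \cref{def:goi} gives $\GOI(z)=\Gnorm{\IF(z)}$; the sign is irrelevant. I expect the delicate points to be the vanishing of the second-order residual term in $\E_P[\partial_\xi\psi]$ and the gauge argument on $\ker\Aop$. Everything else is the routine M-estimator linearization.
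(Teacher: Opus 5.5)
Your proposal is correct and follows the paper's proof essentially step for step. You linearize the contaminated estimating equation, identify $\E_P[D_g\psi]=+\Cv$ through the vanishing conditional mean of the residual-curvature term $(D_gJ)^{\top}Wr$ at $g^{*}$, and handle the rank deficiency by applying the implicit function theorem to the $\PO$-projected equation on $\exp(O)g^{*}$, whose Jacobian is the invertible $\Hoo$. One small point: your added assumption that the contaminant $z$ lies in the support is not needed, since, as the paper notes, any kernel component of $\psi(z)$ is simply discarded by $\PO$, which your projection already does.
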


\begin{proof}
Solutions are sought for the $O$-projected estimating equation
$\Ginner{(1-\varepsilon)\,G^{-1}\E_{P}[\psi]+\varepsilon\,G^{-1}\psi(z,\cdot)}{v_{i}}=0$
($\lambda_{i}>0$) on the submanifold $\exp(O)g^{*}$---the identifiable component of
the problem. This loses nothing on the population side: if $v\in\ker\Cv$ then
$\E\norm{Jv}_{W}^{2}=0$, so $Jv=0$ and $v^{\top}\psi=(Jv)^{\top}Wr=0$ almost surely,
i.e.\ the population score carries no kernel component (a contaminant may; $\PO$
discards it, consistently with the definition of $\IF$ below). By (A1) the projected
equation is $C^{1}$ and its Jacobian in the $O$-directions at $(\varepsilon,\xi)=(0,0)$
is $\Hoo$, which is invertible, so the implicit function theorem yields a $C^{1}$ path
$\varepsilon\mapsto g_{\varepsilon}\in\exp(O)g^{*}$; when $\ker\Aop\neq\{0\}$ the
full solution set is a coset of $\exp(\ker\Aop)$ and $\IF$ denotes the derivative of
this $O$-component selection.
Differentiating the estimating equation
$(1-\varepsilon)\E_{P}[\psi(\cdot,g_{\varepsilon})]+\varepsilon\,\psi(z,g_{\varepsilon})=0$
at $\varepsilon=0$ and using $\E_{P}[\psi(\cdot,g^{*})]=0$ gives
$\E_{P}[D_{g}\psi]\,\eta+\psi(z,g^{*})=0$. By \eqref{eq:score-curv} and \cref{as:reg},
$\E_{P}[D_{g}\psi]=+\Cv$: indeed
$D_{g}\psi=(D_{g}J)^{\top}Wr+J^{\top}WJ$, the first term has zero mean at $g^{*}$
(since $\E[r\,|\,\text{feature}]=0$ and $D_{g}J$ is deterministic given the feature)
and the second has expectation $\Cv$---the Gauss--Newton identification of the
expected score Jacobian with the curvature, i.e.\ the Hessian of the loss
$\tfrac12 r^{\top}Wr$ up to the zero-mean residual-curvature term. Hence
$\Cv\,\eta=-\psi(z,g^{*})$, equivalently $\Aop\,\eta=-\gsc(z)$. The component of
$\eta$ in $O^{\perp_{G}}=\ker\Aop$ is unconstrained by the data and excluded by
definition; restricting to $O$ and inverting $\Hoo$ yields
$\eta=-\Hoo^{-1}\PO\gsc(z)=\IF(z)$. Taking $\Gnorm{\cdot}$ gives the \GOI{} identity.
\end{proof}

\Cref{thm:if} is the operational meaning of \GOI: it is exactly the $G$-magnitude of
the pose shift a measurement causes, to first order. This is the claim we test most
directly in \cref{sec:exp1}.

\subsection{Fisher information and the curvature operator}
\label{sec:fisher}

\begin{theorem}[Fisher--curvature identity]
\label{thm:fisher}
Under Gaussian noise with $W=\Sigma^{-1}$ and \cref{as:reg}, the Fisher information of
the pose equals the curvature,
\begin{equation}
\label{eq:fisher}
I(g^{*})=\E\!\left[\psi\psi^{\top}\right]=\E\!\left[J^{\top}W J\right]=\Cv .
\end{equation}
Here $I(g^{*})$ is the information carried by one measurement. In the
$G$-orthonormal eigenbasis, the Cram\'er--Rao bound for any unbiased estimator
of the observable coordinates $t_{i}=\Ginner{\xi}{v_{i}}$ from $n$ i.i.d.\
features reads
$\mathrm{Cov}(\hat t\,)\succeq\tfrac1n\operatorname{diag}(1/\lambda_{i})_{\lambda_{i}>0}$;
on $O^{\perp_{G}}$ the information is zero and no bound is available (the CRB must be
read as restricted to $O$).
\end{theorem}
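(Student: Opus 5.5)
The plan is to compute the Fisher information directly from the Gaussian log-likelihood in the left-trivialized chart, and then obtain the Cram\'er--Rao statement by restricting to $O$ in the $G$-orthonormal eigenbasis.

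\textbf{Step 1: the score is $\psi$.} For one measurement with known feature, $z\sim\mathcal N(\pi(g^{-1}X),\Sigma)$ with $W=\Sigma^{-1}$. The log-likelihood is, up to a constant, $\ell(\xi)=-\tfrac12 r(z,\exp(\xi)g^{*})^{\top}W\,r(z,\exp(\xi)g^{*})$. Differentiating at $\xi=0$ with the definition of $J$ gives $\nabla_\xi\ell=-J^{\top}Wr=-\psi$. The sign is irrelevant for the outer product, so the score covector is $\pm\psi$ and $I(g^{*})=\E[\psi\psi^{\top}]$.

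\textbf{Step 2: the outer product equals the curvature.} Conditioning on the feature (so $J$ and $W$ are fixed), $r=\eta$ at $g^{*}$, and we get
\[
\E[\psi\psi^{\top}\mid\text{feature}]=J^{\top}W\,\E[\eta\eta^{\top}]\,WJ=J^{\top}W\Sigma WJ=J^{\top}WJ .
\]
Taking the outer expectation over features, using (A3) for independence of noise and geometry, yields $\Cv$. As a cross-check I would also verify $-\E[\nabla^2\ell]=\Cv$ using the computation already done in the proof of \cref{thm:if}: the term $(D_gJ)^{\top}Wr$ has zero conditional mean. Boundedness from (A1)--(A2) justifies exchanging differentiation and expectation.

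\textbf{Step 3: the Cram\'er--Rao bound in eigen-coordinates.} Write $\xi=\sum_i t_iv_i$ with $t_i=\Ginner{\xi}{v_i}$, and let $V$ be the matrix whose columns are the $v_i$, so $\xi=Vt$. In $t$-coordinates the per-measurement information is $V^{\top}\Cv V$. By \eqref{eq:geneig} and $V^{\top}GV=I$, this equals $\operatorname{diag}(\lambda_i)$. For $n$ i.i.d.\ features the information is $n\operatorname{diag}(\lambda_i)$. Now restrict the parameter to the observable coordinates, i.e.\ to the submanifold $\exp(O)g^{*}$ as in \cref{thm:if}. On it the information block $n\operatorname{diag}(\lambda_i)_{\lambda_i>0}$ is invertible, and the standard multivariate CRB gives $\mathrm{Cov}(\hat t)\succeq\tfrac1n\operatorname{diag}(1/\lambda_i)$.

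\textbf{Step 4: the kernel directions.} For $\lambda_i=0$ the argument from \cref{thm:if} applies: $Jv_i=0$ almost surely, so $v_i^{\top}\psi=0$. The likelihood is therefore locally flat along $v_i$, and the information is zero. No unbiased estimator of $t_i$ exists locally, so no bound is available there.

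The main obstacle is Step 3. It needs the regularity conditions for the CRB on a Lie group chart: differentiability of the estimator's mean, and a legitimate restriction to $O$ when the full information is singular. I would handle this by stating the CRB in the exponential chart around $g^{*}$ restricted to $O$, where the parameterization is smooth and the information is nonsingular. This is a local statement, which is all that is claimed.
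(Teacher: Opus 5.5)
Your proposal is correct and follows essentially the same route as the paper: the same conditioning computation $\E[\psi\psi^{\top}\mid\text{feature}]=J^{\top}W\Sigma WJ=J^{\top}WJ$, the same change of coordinates $\xi=Vt$ with $V^{\top}GV=I$ giving per-measurement information $\operatorname{diag}(\lambda_{i})$, and the same use of $Jv_{i}=0$ a.s.\ on $\ker\Cv$ for the zero-information directions. Your two additions are sound and make explicit what the paper's proof asserts without comment: you identify the Gaussian log-likelihood score with $-\psi$, and you justify the CRB for the singular problem by restricting to $\exp(O)g^{*}$.
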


\begin{proof}
At $g^{*}$ the residual is pure noise, $r=\eta$, with
$\E[rr^{\top}\,|\,\text{feature}]=\Sigma$ and the noise independent of the feature
geometry (A3). Conditioning on the feature (which determines $J$) and using
$W\Sigma W=W$,
\[
\begin{aligned}
\E[\psi\psi^{\top}]
&=\E\!\big[J^{\top}W\,\E[rr^{\top}\,|\,J]\,W J\big]\\
&=\E[J^{\top}W\Sigma W J]=\E[J^{\top}WJ]=\Cv .
\end{aligned}
\]
The Fisher information of the Gaussian measurement model is
$I(g^{*})=\E[J^{\top}\Sigma^{-1}J]=\Cv$. For the CRB, reparametrize by the
observable coordinates $t_{i}$: the information matrix of $t$ has entries
$v_{i}^{\top}I(g^{*})v_{j}=v_{i}^{\top}\Cv v_{j}=\lambda_{i}\delta_{ij}$ per
measurement, hence $n\operatorname{diag}(\lambda_{i})$ for $n$ i.i.d.\ features,
so $\mathrm{Cov}(\hat t\,)\succeq\tfrac1n\operatorname{diag}(1/\lambda_{i})$ on
the coordinates with $\lambda_{i}>0$; on $\ker\Aop$ no information accrues.
\end{proof}

Thus the deterministic curvature and the statistical Fisher information are the same
object, and \GOI{} is simultaneously a conditioning quantity and a (restricted)
Cram\'er--Rao sensitivity. We verify \eqref{eq:fisher} entrywise in \cref{sec:exp3}.

\subsection{Standardized influence and the chi-square statistic}
\label{sec:student}

Influence measures how much a measurement moves the estimate; for outlier
\emph{testing}, the relevant question is whether it moves the estimate more than
an \emph{inlier} at the same geometric position would. The two differ by a
leverage factor, and standardizing by the measurement's own inlier null
covariance removes it exactly.

\begin{proposition}[Standardized influence is the chi-square statistic]
\label{prop:student}
Let $\Cv_{z}:=J^{\top}WJ$ denote the per-measurement curvature, assume
$\operatorname{rank}J=2$, and adopt the weight convention of \cref{thm:stability}
($\Sigma=\sigma^{2}\Sigma_{0}$, $W=\Sigma_{0}^{-1}$; for $W=\Sigma^{-1}$ set
$\sigma=1$). Under \cref{as:reg} at $g^{*}$, conditionally on the feature, the
inlier null covariance of the score is
$\mathrm{Cov}(\psi(z))=\sigma^{2}\Cv_{z}$, and the score standardized by it
collapses to the residual statistic:
\begin{equation}
\label{eq:student}
\psi(z)^{\top}\big(\sigma^{2}\Cv_{z}\big)^{\dagger}\psi(z)
\;=\;\frac{r^{\top}Wr}{\sigma^{2}}.
\end{equation}
\end{proposition}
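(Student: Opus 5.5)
The plan has two parts: first compute the conditional covariance of the score, then reduce the pseudoinverse quadratic form to the residual statistic by a purely algebraic identity about $J$. Conditionally on the feature, $J$ and $W$ are deterministic, and at $g^{*}$ the residual is pure noise, $r=\eta$, with $\E[r]=0$ and $\E[rr^{\top}]=\sigma^{2}\Sigma_{0}$ by (A3). Hence $\E[\psi]=0$ and
\[
\mathrm{Cov}(\psi)=J^{\top}W(\sigma^{2}\Sigma_{0})WJ=\sigma^{2}J^{\top}WJ=\sigma^{2}\Cv_{z},
\]
using $W\Sigma_{0}W=W$ for $W=\Sigma_{0}^{-1}$. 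This mirrors the computation in the proof of \cref{thm:fisher}, carried out conditionally instead of after averaging over features.

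For the identity \eqref{eq:student}, factor out $\sigma$: $(\sigma^{2}\Cv_{z})^{\dagger}=\sigma^{-2}\Cv_{z}^{\dagger}$. It then suffices to show
\[
r^{\top}WJ\,(J^{\top}WJ)^{\dagger}J^{\top}Wr=r^{\top}Wr.
\]
Whiten the measurement space with $B:=W^{1/2}J\in\R^{2\times6}$ and $s:=W^{1/2}r$. The left side becomes $s^{\top}B(B^{\top}B)^{\dagger}B^{\top}s$ and the right side becomes $\norm{s}^{2}$.

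The standard Moore--Penrose identity $B(B^{\top}B)^{\dagger}B^{\top}=BB^{\dagger}$ gives the orthogonal projector onto $\range(B)\subseteq\R^{2}$. This can be checked via the SVD of $B$: both sides equal $U\,\mathrm{diag}(I_{k},0)\,U^{\top}$, where $k=\operatorname{rank}B$.

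Since $W$ is positive definite and $\operatorname{rank}J=2$, $B$ has rank $2$, so $\range(B)=\R^{2}$. The projector is therefore $I_{2}$, and the quadratic form reduces to $s^{\top}s=r^{\top}Wr$. Dividing by $\sigma^{2}$ yields \eqref{eq:student}.

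The only delicate step is handling the pseudoinverse of the rank-deficient $6\times6$ matrix $\Cv_{z}$, which has rank $2$. The rank hypothesis is exactly what makes the projector onto $\range(W^{1/2}J)$ equal to the identity. Without it we would obtain only the projected residual, as happens for a degenerate feature. Note also that the pseudoinverse here is the ordinary Euclidean one acting on the covector $\psi$, so no metric $G$ enters. This is consistent with the statement being written in terms of $\psi$ rather than $\gsc$. One could remark that $\Cv_{z}^{\dagger}$ is the generalized inverse appropriate to covectors, and that any generalized inverse gives the same value because $\psi\in\range(\Cv_{z})=\range(J^{\top})$.
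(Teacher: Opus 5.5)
Your proof is correct and follows the paper's argument essentially step for step: the conditional covariance computation using $W\Sigma_{0}W=W$, then whitening by $W^{1/2}$, then recognizing $W^{1/2}J\,(J^{\top}WJ)^{\dagger}J^{\top}W^{1/2}$ as the orthogonal projector onto $\operatorname{col}(W^{1/2}J)=\R^{2}$ under the rank-$2$ hypothesis. The paper's proof adds one step that the displayed identity does not need but the proposition's title does: the form $\psi^{\top}\mathrm{Cov}(\psi)^{\dagger}\psi$ is unchanged under any linear map injective on the support of $\psi$, and taking $M=-\Hoo^{-1}\PO G^{-1}$ (injective on $\range\Cv$, which contains that support for a population-supported feature) then shows that the influence $\IF(z)$, standardized by its own null covariance, yields the same value.
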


\begin{proof}
$\E[\psi\psi^{\top}\,|\,J]=J^{\top}W\,\E[rr^{\top}]\,WJ
=\sigma^{2}J^{\top}W\Sigma_{0}WJ=\sigma^{2}\Cv_{z}$ with $W=\Sigma_{0}^{-1}$.
For the identity, write $\tilde J=W^{1/2}J$ and $\tilde r=W^{1/2}r$; then
$\psi^{\top}\Cv_{z}^{\dagger}\psi
=\tilde r^{\top}\tilde J(\tilde J^{\top}\tilde J)^{\dagger}\tilde J^{\top}\tilde r$,
and $\tilde J(\tilde J^{\top}\tilde J)^{\dagger}\tilde J^{\top}$ is the orthogonal
projector onto $\operatorname{col}(\tilde J)$, which is all of $\R^{2}$ when
$\operatorname{rank}J=2$; hence the expression equals
$\tilde r^{\top}\tilde r=r^{\top}Wr$. (If $\operatorname{rank}J<2$ the right-hand
side is the projected residual statistic.) Finally, the same value is obtained by
standardizing the \emph{influence} itself: for any linear map $M$ injective on the
support of $\psi$, writing $\mathrm{Cov}(\psi)=LL^{\top}$ with $L$ of full column
rank and $\psi=Lu$, one has
$(M\psi)^{\top}\big(M\,\mathrm{Cov}(\psi)M^{\top}\big)^{\dagger}(M\psi)
=u^{\top}(ML)^{\top}\big((ML)(ML)^{\top}\big)^{\dagger}(ML)\,u=u^{\top}u
=\psi^{\top}\mathrm{Cov}(\psi)^{\dagger}\psi$, since
$A^{\top}(AA^{\top})^{\dagger}A$ is the identity for full-column-rank $A$.
Taking $M=-\Hoo^{-1}\PO G^{-1}$, which is injective on
$\range\Cv\supseteq\operatorname{supp}(\psi)$ (the score of a
population-supported feature carries no kernel component, cf.\ the proof of
\cref{thm:if}), shows the Mahalanobis norm of $\IF(z)$ with respect to its own
null covariance equals \eqref{eq:student}.
\end{proof}

\begin{remark}[What the standardization does and does not say]
\label{rem:standardization}
Two points delimit \cref{prop:student}. (i) The last step of the proof is an
algebraic identity valid for \emph{every} injective linear map $M$: whitening
any linear image of $\psi$ by its own null covariance returns the whitened
$\psi$. The geometric content of the proposition therefore lies in the
\emph{choice} of standardization---each measurement is compared with an inlier
of the same geometry, through $\Cv_{z}$ alone---and not in a property of
$\Hoo^{-1}$; a different reference (e.g.\ the population curvature $\Cv$) yields
a different statistic. (ii) The statistic is evaluated at the true pose $g^{*}$
and normalized by the population noise covariance. It is thus the $\SE$
analogue of the classical \emph{standardized} residual, not of the
\emph{studentized} residual, which is evaluated at the fitted solution and
carries the in-sample factor $(1-h_{ii})$ of the global
leverage~\cite{cook77}. At the fitted pose a high-leverage measurement pulls the
estimate toward itself and its residual shrinks accordingly; the chi-square
gate is blind to exactly this masking mechanism, which is the price of the
standardization and is the failure mode a bounded-influence design must address
separately.
\end{remark}

\begin{remark}[Influence conflates information with harm]
\label{rem:leverage}
\Cref{prop:student} frames the practical use of \GOI. First, the ubiquitous
chi-square residual test \emph{is} the influence test standardized against the
measurement's own inlier geometry: this accounts for its robustness to
contamination that is visible in the residual, and, by
\cref{rem:standardization}, for its blindness to contamination that is not.
Second, thresholding \emph{raw} \GOI{} over-rejects high-leverage inliers: a
measurement aligned with a weakly observable direction has influence inflated by
$1/\lambda_{\imath}$ whether its residual is bias or ordinary noise, because the
very \emph{information} it carries about that direction is what makes it
influential; removing it de-informs the scarce direction. Raw-influence gating
is therefore predicted to degrade precisely in weakly observable geometry---the
prediction tested in \cref{sec:exp6} on KITTI. \GOI{} remains the correct
quantity for \emph{attribution} (which measurement moved the estimate, and by
how much, \cref{thm:if}) and for sensitivity auditing; it is the wrong raw
statistic for outlier testing.
\end{remark}

\subsection{Conditioning and weak observability}
\label{sec:conditioning}

The smallest observable eigenvalue $\lmin:=\min_{i:\lambda_{i}>0}\lambda_{i}$ measures
weak observability. By \cref{prop:spectral} the worst-case amplification of score energy
is $1/\lmin$, attained along $v_{\imath}$. In $\SE$ pose estimation the translational
block of $\Cv$ scales with squared parallax, so $\lmin$ collapses quadratically as
parallax vanishes (\cref{sec:exp2}); the eigenvector $v_{\imath}$ is then
translation-dominated, identifying \emph{which} motion is unobservable.

\begin{remark}[Choice of trivialization and origin]
\label{rem:gauge}
The spectrum of $\Aop$, and hence \GOI, depends on the chart: in the spatial
(left) trivialization the rotational block of $J$ contains the world coordinates
$X^{\wedge}$, so rotations are measured about the \emph{world origin} and the
rotation--translation coupling scales with $\norm{X}$. This is immaterial when the
scene sits near the origin, but on long trajectories (world coordinates of
$10^{2}$\,m and more) it inflates the spectrum and distorts both the index and the
$\SE$ log error metric by orders of magnitude. All quantities should therefore be
computed in a camera-anchored frame---equivalently, after recentering the world at
the current camera position, under which the residuals are invariant. Our real-data
pipelines apply this recentering per frame.
\end{remark}

\begin{remark}[Quadratic translational collapse]
\label{rem:parallax}
The scaling has an elementary source. With $p=R^{\top}(X-t)$ and depth $Z=p_{3}$,
$\partial\pi/\partial p=Z^{-1}[\,I_{2}\;\;-\pi(p)\,]$, so the translational block
$J_{\rho}=(\partial\pi/\partial p)R^{\top}$ has entries $O(1/Z)$, while the
rotational block $J_{\phi}=-(\partial\pi/\partial p)R^{\top}X^{\wedge}$ has entries
$O(\norm{X}/Z)=O(1)$ for scenes viewed at bounded field of view. The translational
block of $\Cv$ therefore scales as $Z^{-2}$: with the parallax proxy
$\propto 1/Z$ used in \cref{sec:exp2}, $\lmin\propto\text{parallax}^{2}$ \emph{asymptotically in the deep-scene regime},
while the rotational eigenvalues persist; at near range the translation--rotation
coupling makes $\lmin$ decay more slowly. \Cref{sec:exp2} fits the exponent $1.98$
over the asymptotic regime.
\end{remark}

\section{Stability radius}
\label{sec:stability}

We now bound the finite-sample pose error in terms of $\lmin$. Let
$\hat\xi_{n,O}=\PO\,\log(\hat g_{n}\,g^{*-1})$ be the observable component of the
estimation error from $n$ noisy features.

\begin{theorem}[Stability radius]
\label{thm:stability}
Under \cref{as:reg}, including the sampling and leverage condition (A4), write
$\Sigma=\sigma^{2}\Sigma_{0}$ and
compute $\Cv$, the eigenpairs $(\lambda_{i},v_{i})$, and $\lmin$ with the noise-free
weight $W=\Sigma_{0}^{-1}$, so that geometry and noise scale are decoupled; by the
computation in \cref{thm:fisher} (applied with weight $\Sigma^{-1}=\sigma^{-2}W$),
the score covariance is then $\E[\psi\psi^{\top}]=\sigma^{2}\Cv$. There are constants
$c_{1},c_{2},c_{3}$ depending only on $C_{J},C_{W},C_{J'}$, the leverage constant
$\kappa$, and the sub-Gaussian constant $K$ of the normalized noise
$\sigma^{-1}\Sigma_{0}^{-1/2}\eta$, such that the following holds with
probability at least $1-2\delta$. Write
$a_{n}:=c_{1}\sigma\sqrt{\log(12/\delta)/(n\lmin)}$, $b:=c_{2}/\lmin$, and
\begin{equation}
\label{eq:stability-n0}
n_{0}\;:=\;c_{3}\,(1+\sigma^{2})\,\lmin^{-2}\,\log(12/\delta),
\end{equation}
the sample size beyond which the empirical curvature is a controlled
perturbation of $\Cv$ on $O$ (see the proof). For $n\ge n_{0}$, every root
$\hat g_{n}$ of the empirical estimating equation in the ball
$\Gnorm{\xi}<1/(2b)$ around $g^{*}$ satisfies
\begin{equation}
\label{eq:stability}
\Gnorm{\hat\xi_{n,O}}\;\le\;a_{n}+b\,\Gnorm{\hat\xi_{n,O}}^{2},
\end{equation}
and if moreover $4a_{n}b<1$, i.e.\
\begin{equation}
\label{eq:stability-n}
n\;>\;\max\!\big\{16\,c_{1}^{2}c_{2}^{2}\,\sigma^{2}\,\lmin^{-3}\,\log(12/\delta),
\;n_{0}\big\},
\end{equation}
such a root exists, is unique in the ball, and obeys
\begin{equation}
\label{eq:stability-rate}
\Gnorm{\hat\xi_{n,O}}\;\le\;2a_{n}
\;=\;\frac{2c_{1}\,\sigma}{\sqrt{n\,\lmin}}\sqrt{\log(12/\delta)},
\end{equation}
i.e.\ $\Gnorm{\hat\xi_{n,O}}=O\!\big(\sigma/\sqrt{n\,\lmin}\big)$.
\end{theorem}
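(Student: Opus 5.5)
The plan is to linearize the empirical estimating equation on the identifiable submanifold $\exp(O)g^{*}$, exactly as in the proof of \cref{thm:if}, and then run a quantitative implicit-function (Newton–Kantorovich type) argument. Write the projected empirical score as
\[
F_{n}(\xi):=\PO G^{-1}\tfrac1n\textstyle\sum_{j}\psi_{j}(\exp(\xi)g^{*}),\qquad \xi\in O .
\]
By Taylor's theorem with (A1),
\[
F_{n}(\xi)=F_{n}(0)+\PO G^{-1}\Cv_{n}\xi+R_{n}(\xi),
\]
where the remainder $R_{n}$ collects two pieces. The first is the second-order Taylor term, bounded by $C\Gnorm{\xi}^{2}$ with $C$ depending on $C_{J},C_{W},C_{J'}$. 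The second is the residual-curvature term $\tfrac1n\sum_{j}(D_{g}J_{j})^{\top}Wr_{j}$, which is zero-mean by the argument of \cref{thm:if} and is treated as part of the concentration step. A root then satisfies $\xi=-(\PO\Aop_{n}\PO)^{-1}(F_{n}(0)+R_{n}(\xi))$, where $\Aop_{n}=G^{-1}\Cv_{n}$. Taking norms gives \eqref{eq:stability}: $a_{n}$ bounds $\Gnorm{(\PO\Aop_{n}\PO)^{-1}F_{n}(0)}$, and $b\approx 2C/\lmin$ bounds the inverse times the quadratic remainder.

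\textbf{Step 1 (score concentration).} Whiten in the $G$-orthonormal eigenbasis. The vector $\Hoo^{-1/2}\PO G^{-1}\tfrac1n\sum\psi_{j}$ has covariance $\sigma^{2}I/n$ by the computation of \cref{thm:fisher}. Each summand is sub-Gaussian with constant $\sigma K\sqrt{\kappa}$ by (A3)–(A4), since (A4) bounds $\Hoo^{-1/2}J^{\top}WJ\Hoo^{-1/2}\preceq\kappa I$. A vector Hoeffding/Bernstein bound over the at most six dimensions (the source of the $12/\delta$, from a union over 6 coordinates and two signs) gives, with probability $1-\delta$, a whitened norm of at most $c\,\sigma\sqrt{\log(12/\delta)/n}$. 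Applying $\Hoo^{-1/2}$ costs at most $\lmin^{-1/2}$, which yields $a_{n}$.

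\textbf{Step 2 (curvature concentration, the main obstacle).} I need $\PO\Aop_{n}\PO\succeq\tfrac12\Hoo$ on $O$, and the residual-curvature term must be absorbed into the linear part rather than the constant. For the first, I will use matrix Bernstein on the whitened summands $\Hoo^{-1/2}(J_{j}^{\top}WJ_{j}-\Cv)\Hoo^{-1/2}$, which are bounded by $\kappa+1$ via (A4). This gives relative deviation $\le\tfrac14$ once $n\gtrsim\kappa\log(12/\delta)$. The noise-dependent term $\tfrac1n\sum(D_{g}J_{j})^{\top}Wr_{j}$ is a zero-mean sub-Gaussian matrix of size $\sigma C_{J'}C_{W}^{1/2}\sqrt{\log(12/\delta)/n}$. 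It must be $\le\lmin/4$, which requires $n\gtrsim\sigma^{2}\lmin^{-2}\log(12/\delta)$. Combining the two conditions, together with an unwhitened fallback bound using $C_{J}^{2}C_{W}$, gives the threshold $n_{0}$ of \eqref{eq:stability-n0}, with the $(1+\sigma^{2})\lmin^{-2}$ form. I expect this bookkeeping to be delicate: one must decide whether the residual-curvature term goes into $a_{n}$ or into the linear operator, and I choose the latter so that $a_{n}$ keeps its clean rate. A union bound over Steps 1 and 2 gives probability $1-2\delta$.

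\textbf{Step 3 (existence, uniqueness, rate).} On the ball $\Gnorm{\xi}<1/(2b)$, define $T(\xi)=-(\PO\Aop_{n}\PO)^{-1}(F_{n}(0)+R_{n}(\xi))$. Every root is a fixed point of $T$, so \eqref{eq:stability} holds. The quadratic inequality $x\le a_{n}+bx^{2}$ with $4a_{n}b<1$ forces $x\le\frac{1-\sqrt{1-4a_{n}b}}{2b}\le 2a_{n}$ on the small branch. The ball restriction $x<1/(2b)$ excludes the large branch, since $1/(2b)$ lies below the larger root. Next, $T$ maps the closed ball of radius $2a_{n}$ into itself, because $a_{n}+4ba_{n}^{2}\le 2a_{n}$. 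It is also a contraction there, since the Lipschitz constant of $R_{n}$ times the inverse is $\le 2b\cdot 2a_{n}<1$. Banach's fixed-point theorem then gives existence and uniqueness. Finally, $4a_{n}b<1$ unpacks to the first term of \eqref{eq:stability-n}, which yields \eqref{eq:stability-rate}.
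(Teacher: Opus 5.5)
Your proposal is correct and follows the paper's proof in all essentials. It uses the same linearization on $O$, the same (A4)-driven sub-Gaussian bound on the averaged score for $a_n$ (your whitening by $\Hoo^{-1/2}$ is equivalent to the paper's coordinate-wise bound), a $1/\lmin$-amplified quadratic Taylor term for $b$, matrix Bernstein plus a sub-Gaussian bound on the residual-curvature term (which the paper also folds into $F_n'(0)$ before preconditioning by its inverse), and the same quadratic-inequality localization. Your departures are mild. You get existence and uniqueness from a Banach contraction on the ball of radius $2a_n$ instead of Newton--Kantorovich, and your leverage-whitened matrix Bernstein shows the $\sigma$-free part of $n_0$ needs only $n\gtrsim\kappa\log(12/\delta)$, so $\lambda_{\min}^{-2}$ enters only through the $\sigma^{2}$ residual-curvature term. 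One small adjustment: once that term is moved into the linear operator, the operator is no longer self-adjoint, so your Loewner bound should be read as a whitened operator-norm perturbation bound.
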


\begin{proof}
\emph{Linear term.}
The Gauss--Newton normal equations give
$\hat\xi_{n,O}=-\Hoo^{-1}\PO\,\bar\psi_{n}^{\,\sharp}+\mathrm{rem}$, where
$\bar\psi_{n}^{\,\sharp}=G^{-1}\bar\psi_{n}$ and
$\bar\psi_{n}=\tfrac1n\sum_{k=1}^{n}\psi(z_{k},g^{*})$. In the $G$-orthonormal
eigenbasis the observable coordinates are
$(\hat\xi_{n,O})_{i}=-\bar\psi_{n,i}/\lambda_{i}+\mathrm{rem}_{i}$ with
$\bar\psi_{n,i}=v_{i}^{\top}\bar\psi_{n}$, and
$\operatorname{Var}(\bar\psi_{n,i})=\sigma^{2}\lambda_{i}/n$ by the score covariance
above. Each summand $v_{i}^{\top}\psi(z_{k})=(W^{1/2}J_{k}v_{i})^{\top}W^{-1/2}
\Sigma_{0}^{-1}\eta_{k}$ is, conditionally on the feature, a fixed linear functional
of the noise, hence zero-mean sub-Gaussian with parameter
$K\sigma\,\norm{W^{1/2}J_{k}v_{i}}$, where $K$ is the sub-Gaussian constant of the
normalized noise (A3); and by the leverage condition (A4),
$\norm{W^{1/2}J_{k}v_{i}}^{2}=v_{i}^{\top}(J_{k}^{\top}WJ_{k})v_{i}
\le\kappa\,v_{i}^{\top}\Cv v_{i}=\kappa\lambda_{i}$ almost surely. Taking the
conditional moment-generating function and then the expectation over the i.i.d.\
features (A4) shows $\bar\psi_{n,i}$ is sub-Gaussian with parameter
$K\sigma\sqrt{\kappa\lambda_{i}/n}$. The two-sided sub-Gaussian tail bound and a
union bound over the at most six coordinates give, with probability at least
$1-\delta$, simultaneously for all $i$,
$|\bar\psi_{n,i}|\le c_{1}'\sigma\sqrt{\lambda_{i}\log(12/\delta)/n}$ with
$c_{1}'=K\sqrt{2\kappa}$; the factor $12=2\times6$ collects the two tails of the six
coordinates. Summing the squared coordinates,
\[
\begin{aligned}
\sum_{i:\lambda_{i}>0}\frac{\bar\psi_{n,i}^{2}}{\lambda_{i}^{2}}
&\le(c_{1}')^{2}\,\frac{\sigma^{2}\log(12/\delta)}{n}
\sum_{i:\lambda_{i}>0}\frac{1}{\lambda_{i}}\\
&\le\frac{6\,(c_{1}')^{2}\,\sigma^{2}\log(12/\delta)}{n\,\lmin},
\end{aligned}
\]
which is the linear term of \eqref{eq:stability} with $c_{1}=\sqrt6\,c_{1}'$.
(Bounding
$\Gnorm{\PO\bar\psi_{n}^{\,\sharp}}$ as a whole and dividing by $\lmin$ would be
lossy: the norm of the averaged score concentrates at the scale
$\sigma\sqrt{\tr\Lambda/n}$, governed by the \emph{largest} eigenvalues; the
coordinate-wise argument is what produces the correct $1/\lmin$ dependence.)

\emph{Remainder.}
The remainder collects two effects, both amplified by $\opnorm{\Hoo^{-1}}=1/\lmin$ on
$O$. First, the second-order Taylor term of the estimating equation: by (A1)--(A2)
the derivative of the empirical score map is $C_{J'}$-Lipschitz in operator norm on
a neighborhood of $g^{*}$, so the Taylor remainder of the score at $\hat\xi_{n,O}$
is at most $\tfrac12 C_{J'}C_{W}(1+C_{J})\Gnorm{\hat\xi_{n,O}}^{2}$ up to the
equivalence constants between $\norm{\cdot}$ and $\Gnorm{\cdot}$; applying
$\Hoo^{-1}\PO$ yields the quadratic term of \eqref{eq:stability} with a constant
$c_{2}$ of the stated dependence; $c_{2}$ is taken large enough that the ball
$\Gnorm{\xi}<1/(2b)$ lies inside the neighborhood of (A1). Second, the
perturbation of the derivative of the empirical estimating map at $g^{*}$.
Writing
$F_{n}(\xi):=\Hoo^{-1}\PO G^{-1}\tfrac1n\sum_{k}\psi(z_{k},\exp(\xi)g^{*})$ on
$O$, one has
\[
F_{n}'(0)-I=\Hoo^{-1}\PO G^{-1}\Big[(\Cv_{n}-\Cv)+\tfrac1n\sum_{k}(D_{g}J_{k})^{\top}W r_{k}\Big].
\]
The summands $J_{k}^{\top}WJ_{k}$ of $\Cv_{n}$ are i.i.d.\ and bounded by
$C_{J}^{2}C_{W}$ in operator norm (A2), so by the matrix Bernstein
inequality~\cite{tropp} $\opnorm{\Cv_{n}-\Cv}\le c\,C_{J}^{2}C_{W}
\sqrt{\log(12/\delta)/n}$ with probability at least $1-\delta/2$, for an
absolute constant $c$. The second bracket is a zero-mean average: conditionally
on the features each term is a linear image of the noise $r_{k}=\eta_{k}$ under
a map of norm at most $C_{J'}C_{W}$, hence sub-Gaussian with parameter
$K\sigma C_{J'}C_{W}$, and a vector sub-Gaussian tail bound gives
$\norm{\tfrac1n\sum_{k}(D_{g}J_{k})^{\top}Wr_{k}}\le c\,K\sigma C_{J'}C_{W}
\sqrt{\log(12/\delta)/n}$ with probability at least $1-\delta/2$. Both terms are
amplified by $\opnorm{\Hoo^{-1}}=1/\lmin$, so on the intersection of the two
events---probability at least $1-\delta$---$\opnorm{F_{n}'(0)-I}\le\tfrac14$
as soon as $n\ge n_{0}$ with $c_{3}\propto C_{J}^{4}C_{W}^{2}+K^{2}C_{J'}^{2}C_{W}^{2}$
(the factor $1+\sigma^{2}$ in \eqref{eq:stability-n0} collects the two terms).
On this event $F_{n}'(0)$ is invertible on $O$ with
$\beta:=\opnorm{F_{n}'(0)^{-1}}\le\tfrac43$; multiplying the estimating equation
by $F_{n}'(0)^{-1}$ inflates the linear term and the Taylor constant by at most
$\beta$, and this factor is absorbed into $c_{1}$ and $c_{2}$. Together with the
score event of the linear term, which has probability at least $1-\delta$, this
establishes \eqref{eq:stability} with probability at least $1-2\delta$ for any
root in the ball $\Gnorm{\xi}<1/(2b)$.

\emph{Localization.}
Inequality \eqref{eq:stability} is implicit in $t:=\Gnorm{\hat\xi_{n,O}}$ and by
itself does not bound $t$: the set $\{t\ge0:\,t\le a_{n}+bt^{2}\}$ is
$[0,t_{-}]\cup[t_{+},\infty)$ with
$t_{\pm}=\big(1\pm\sqrt{1-4a_{n}b}\,\big)/(2b)$ when $4a_{n}b<1$. Since
$t_{-}\le2a_{n}$ and $t_{+}\ge1/(2b)$, a root that lies in the ball
$\Gnorm{\xi}<1/(2b)$ must satisfy $t\le t_{-}\le2a_{n}$, which is
\eqref{eq:stability-rate}. Existence and uniqueness of such a root under
\eqref{eq:stability-n} follow from the Newton--Kantorovich
theorem~\cite{ortega} applied, on the event of the previous paragraph, to the
preconditioned map $\tilde F_{n}:=F_{n}'(0)^{-1}F_{n}$ at $\xi=0$: its
derivative at $0$ is the identity on $O$, its derivative is $2b$-Lipschitz on the
ball (the Taylor bound, times $\beta$, absorbed into $c_{2}$), and the Newton
step at $\xi=0$ has norm $\Gnorm{\tilde F_{n}(0)}\le a_{n}$ (the linear term,
times $\beta$, absorbed into $c_{1}$); the Kantorovich condition
$2b\cdot a_{n}\le\tfrac12$ is exactly $4a_{n}b\le1$, and it yields a root within
distance $t_{-}\le2a_{n}$ of $g^{*}$ that is unique in the ball of radius
$t_{+}\ge1/(2b)$, to which the Gauss--Newton iteration converges from any point
of that ball. The roots of $\tilde F_{n}$ and of the empirical estimating
equation coincide. Condition
\eqref{eq:stability-n} is the quantitative form of ``the quadratic term is at
most half the linear term'', i.e.\ $\Gnorm{\hat\xi_{n,O}}\lesssim\lmin$.
\end{proof}

Two readings of \eqref{eq:stability} coexist. The deterministic worst case
$\Gnorm{\hat\xi}\le\opnorm{\Hoo^{-1}}\,\Gnorm{\PO\bar\psi_{n}^{\,\sharp}}
=\Gnorm{\PO\bar\psi_{n}^{\,\sharp}}/\lmin$
scales as $1/\lmin$. The statistical RMS error, with isotropic noise spread across
directions, scales as $1/\sqrt{\lmin}$ because $\E\Gnorm{\hat\xi_{n,O}}^{2}
=\tfrac{\sigma^{2}}{n}\tr(\Hoo^{-1})\sim\sigma^{2}/(n\lmin)$. The experiments in
\cref{sec:exp4} probe the statistical law $\sigma/\sqrt{n\lmin}$.

\section{Diagnostics}
\label{sec:diagnostics}

\Cref{thm:if,thm:fisher,thm:stability,prop:student} yield three drop-in
diagnostics for a Gauss--Newton back end, each a thresholded spectral quantity
computable from $\Cv_{n}$ and the per-feature scores at negligible cost.

\paragraph{(D1) Influence attribution, and gating.}
A feature whose true 3D point has moved produces a biased residual
$r_{d}=r_{s}+b_{d}$. By \cref{thm:if} its effect on the \emph{pose} is $\GOI(z)$,
not the raw score magnitude $\norm{\psi(z)}$: a large residual along a
well-observed direction is harmless, whereas a moderate residual along
$v_{\imath}$ is not. For \emph{attribution}---identifying which measurements
moved the estimate, auditing a pose jump, or ranking sensitivity---threshold
$\GOI(z)>\tau_{\GOI}$. For outlier \emph{gating}, \cref{prop:student} prescribes
the standardized statistic, which is exactly the classical chi-square residual
test (with the masking caveat of \cref{rem:standardization}); raw-\GOI{} gating
over-rejects high-leverage inliers under weak observability
(\cref{rem:leverage,sec:exp6}).

\paragraph{(D2) Low-parallax / pure-rotation segments.}
Translational observability collapse is detected by
$\lmin(\Hoo)<\tau_{\lambda}$, with the offending direction read from
$v_{\imath}$ (translation- vs.\ rotation-dominated).

\paragraph{(D3) Weak-direction risk.}
The alignment ratio $\rho_{1}(z)$ in \eqref{eq:rho1} flags measurements whose score
concentrates on the weakest observable direction, independently of magnitude.

\section{Numerical experiments}
\label{sec:experiments}

We validate the theory on controlled synthetic $\SE$ problems and probe the
gating question on real TUM RGB-D and KITTI sequences
(\cref{sec:exp5,sec:exp6}). World points are sampled
in front of a known pose; calibrated measurements are corrupted by Gaussian pixel noise
of scale $\sigma$; the curvature, scores, and spectra are computed exactly from
\cref{sec:prelim,sec:observable}. The analytic Jacobian matches central finite
differences to $8.6\times10^{-11}$, and the included self-test reproduces the
Fisher--curvature identity \eqref{eq:fisher} to $3.3\times10^{-3}$ relative error (the
larger Monte-Carlo study of \cref{sec:exp3} attains $1.8\times10^{-3}$), so the
implementation is a faithful realization of the model. The finite-difference test
validates $J$ against the projection model alone, independently of the influence
machinery; the experiments then test the operator identities \emph{given} $J$ (this
division matters: a Gauss--Newton solver and the influence formula share the same
estimating equation, so agreement between them cannot by itself certify $J$).
We distinguish two kinds of experiment. The checks of the influence-function
and Fisher--curvature identities (\cref{sec:exp1}(a),(b) and \cref{sec:exp3}(a))
are numerical verifications of first-order identities; they certify the
implementation and the adequacy of the first-order approximation at the
contamination levels used, and a correct implementation cannot fail them. The
scaling laws (\cref{sec:exp2,sec:exp4}), whose exponents are not identities but
consequences of the model, and the real-data studies (\cref{sec:exp5,sec:exp6}),
which test the practical prediction of \cref{prop:student}, are the
substantive tests.
Experiments 1, 2, and 4 use the $\sigma$-free weight $W=\Sigma_{0}^{-1}=I_{2}$ of
\cref{thm:stability}; experiment~3(a) and the self-test use $W=\Sigma^{-1}$ as in
\cref{thm:fisher}. ``Log-correlation'' denotes the Pearson correlation between
logarithms; correlations and slopes reported as $1.00$ are rounded to two
decimals (they verify first-order identities, not statistical coincidences). All quantities are metric-$G$
with $G=I_{6}$; in this normalization $\gsc=\psi$, $\Aop=\Cv$, and $\pinvG$ is the
ordinary Moore--Penrose pseudoinverse, so the formulas below are stated with $\psi$.
Code is released (\cref{sec:repro}).

\subsection{The influence-function law and pose influence}
\label{sec:exp1}

\Cref{fig:exp1}(a) tests \cref{thm:if} directly: for an $\varepsilon$-contaminating
measurement we compare the predicted $\GOI(z)=\Gnorm{\Hoo^{-1}\PO\psi}$ against the
empirical pose shift $\Gnorm{\Delta g}/\varepsilon$ of the re-solved M-estimator. The
two agree with Pearson correlation $r=1.00$ and unit slope ($1.00$), confirming that
the implementation realizes \cref{thm:if} and that the first-order approximation
is accurate at the contamination levels used (given the independently validated
$J$). \Cref{fig:exp1}(b) illustrates the
attribution claim. Over the contaminated features of an ill-conditioned scene,
\GOI{} tracks the true leave-one-out pose shift with log-correlation $r=1.00$ in
every design we tested. This agreement is expected by construction---by
\cref{thm:if} the leave-one-out shift is, to first order, $\Hoo^{-1}\PO\gsc$,
so any criterion lacking the $\Hoo^{-1}$ factor must lose correlation in
ill-conditioned geometry---and the panel is included to exhibit the size of the
effect, not as evidence for a gating criterion. The comparison statistic is the
raw score norm $\norm{\psi}=\norm{J^{\top}Wr}$, which shares the residual
criterion's lack of the $\Hoo^{-1}$ factor. It is design-dependent, because it
conflates bias \emph{magnitude} with weak-direction \emph{alignment}: when bias
magnitudes are confined to a half-decade band---isolating the alignment
mechanism---$\norm{\psi}$ achieves only $r=0.39$; when magnitudes vary freely
over $1.4$ decades it recovers correlation through the common magnitude factor
($r=0.81$) yet cannot resolve \emph{which} equally sized biases are harmful.
Influence is not residual magnitude---as an \emph{attribution}. Whether it is the
better \emph{test} is a different question, settled by \cref{prop:student} and the
real-data studies of \cref{sec:exp5,sec:exp6}. \Cref{fig:exp1}(c) verifies the spectral mechanism of
\cref{prop:spectral}: in a low-parallax scene ($\mathrm{cond}(\Hoo)\approx2.5\times10^{4}$),
$\GOI(z)$ is bounded below by and closely tracks the weakest-direction term
$|\psi_{\imath}|/\lmin$.

\begin{figure}[t]
\centering
\includegraphics[width=\textwidth]{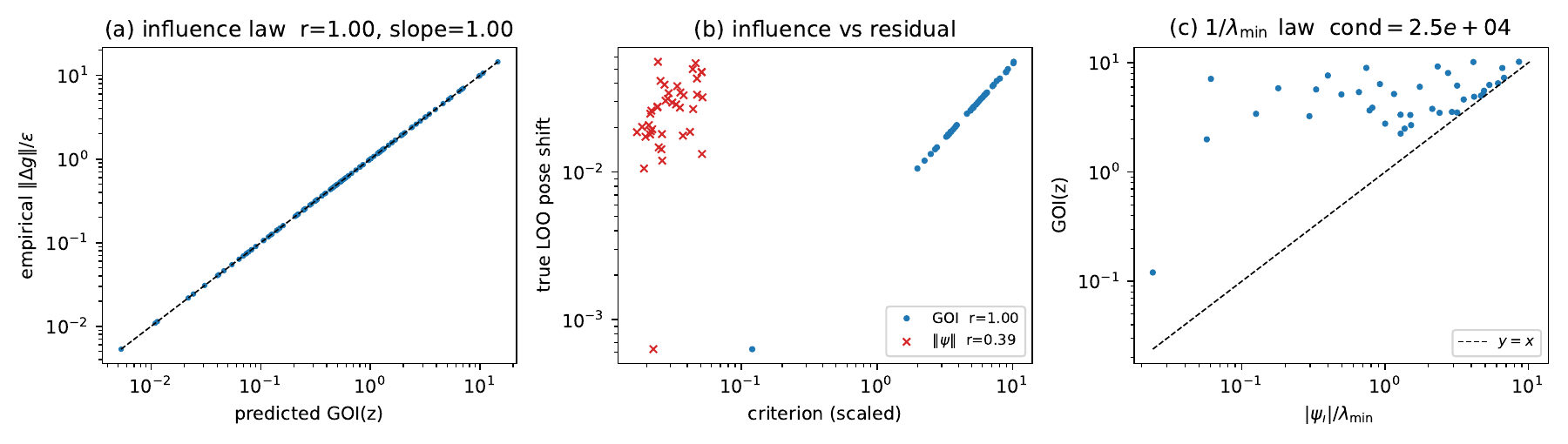}
\caption{Influence-function validation. (a) Empirical pose shift vs.\ predicted \GOI{}
($r=1.00$, slope $1.00$). (b) \GOI{} predicts the true leave-one-out pose shift
($r=1.00$) while the raw score norm does not ($r=0.39$; magnitude-controlled
contamination, see text). (c) The $1/\lmin$
amplification law: \GOI{} is governed by the weakest-direction term.}
\label{fig:exp1}
\end{figure}

\subsection{Translational collapse and pure rotation}
\label{sec:exp2}

\Cref{fig:exp2}(a) sweeps parallax (inverse mean scene depth) and measures
$\lmin(\Hoo)$. A log--log fit gives a collapse exponent of $1.98$, matching the
predicted quadratic scaling $\lmin\propto\text{parallax}^{2}$ of
\cref{rem:parallax}. \Cref{fig:exp2}(b) shows \emph{which} directions collapse: the
two translation-dominated eigenvalues fall by two orders of magnitude between high- and
low-parallax geometry, while the rotation-dominated eigenvalues are essentially
unchanged---as depth grows the image motion approaches that of a pure rotation and
translation becomes unobservable, exactly as the spectral picture predicts. \Cref{fig:exp2}(c) runs detector (D2) along a trajectory containing a
low-parallax segment; the test $\lmin^{(n)}<\tau_{\lambda}$ fires precisely on the
degenerate frames.

\begin{figure}[t]
\centering
\includegraphics[width=\textwidth]{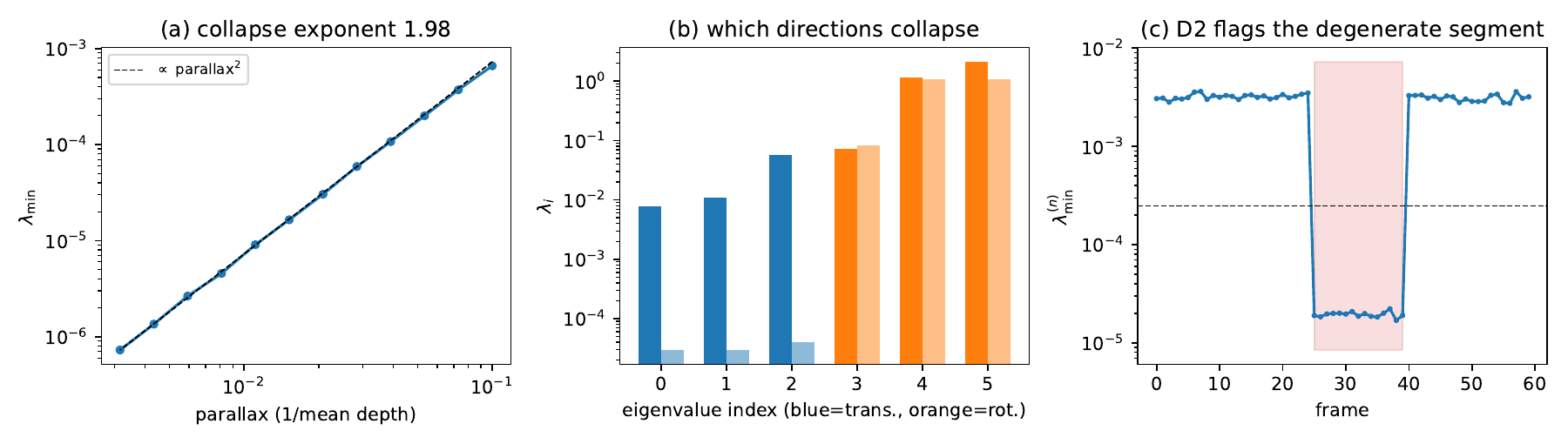}
\caption{Weak observability. (a) $\lmin(\Hoo)$ collapses as parallax$^{2}$ (fitted
exponent $1.98$). (b) Translation eigenvalues collapse under low parallax while
rotation eigenvalues persist. (c) The $\lmin$ test flags the low-parallax segment of a
trajectory.}
\label{fig:exp2}
\end{figure}

\subsection{Fisher equivalence and finite-sample concentration}
\label{sec:exp3}

\Cref{fig:exp3}(a) confirms the Fisher--curvature identity \eqref{eq:fisher} entrywise:
the Monte-Carlo Fisher information $\E[\psi\psi^{\top}]$ and the curvature
$\E[J^{\top}WJ]$ agree to $1.8\times10^{-3}$ relative error. \Cref{fig:exp3}(b) shows
$\opnorm{\Cv_{n}-\Cv}/\opnorm{\Cv}$ decaying with slope $-0.49$
in $n$, the matrix-Bernstein rate underlying \cref{thm:stability}.
\Cref{fig:exp3}(c) propagates this to the diagnostics: the relative errors of $\lmin$
and of \GOI{} decay at the same $O(1/\sqrt n)$ rate, so the tests are reliable already
at moderate feature counts.

\begin{figure}[t]
\centering
\includegraphics[width=\textwidth]{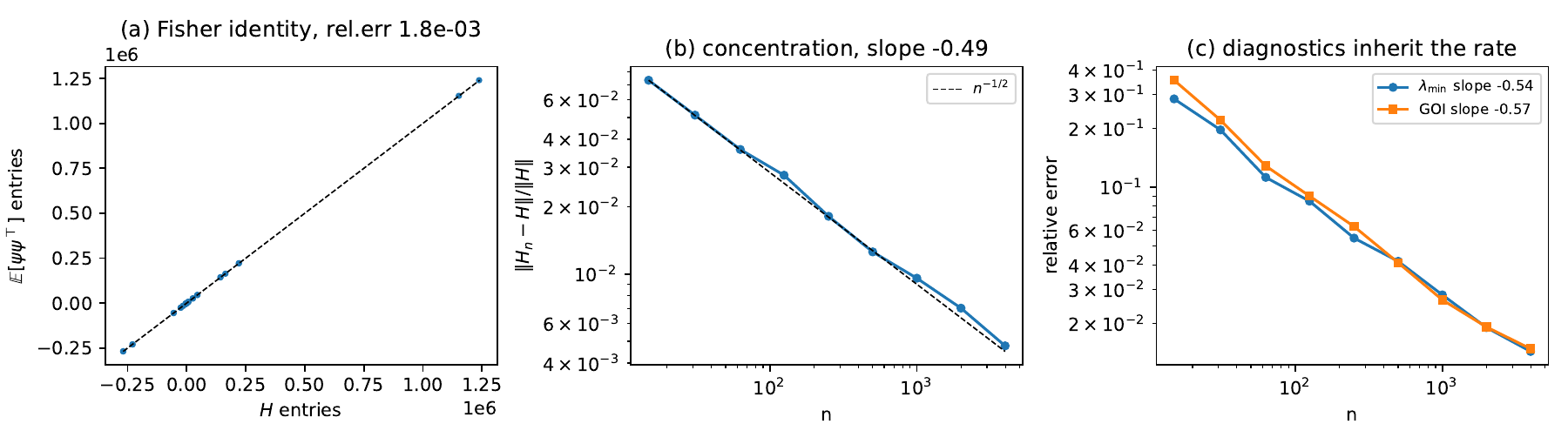}
\caption{Fisher equivalence and concentration. (a) $\E[\psi\psi^\top]$ vs.\ $\Cv$
entrywise (rel.\ error $1.8\times10^{-3}$). (b) Curvature concentration at the
$1/\sqrt n$ rate. (c) The same rate propagates to $\lmin$ and \GOI.}
\label{fig:exp3}
\end{figure}

\subsection{The stability law}
\label{sec:exp4}

\Cref{fig:exp4} tests \cref{thm:stability}. The observable error $\Gnorm{\hat\xi_{n,O}}$
scales as $n^{-0.54}$ at fixed conditioning (a) and as $\lmin^{-0.52}$ as conditioning
is varied through parallax (b), both matching the statistical RMS exponent $-1/2$. In
(c) all runs---three noise levels and three sample sizes---collapse onto a single line
when plotted against $\sigma/\sqrt{n\,\lmin}$, with a $5\%$ coefficient of variation,
confirming the predicted stability law.

The leverage condition (A4) under which \cref{thm:stability} operates is
verified on the same designs, in both its finite-sample and literal forms.
Per scene, the leverage constant
$\max_{z}\sup_{v}v^{\top}(J^{\top}WJ)v/v^{\top}\Cv_{n}v$ lies in
$[4.9,\,13.1]$ (median $8.5$) across every $(n,\text{depth})$ configuration of
panels (a)--(c); it is flat along the conditioning sweep (median $8.2$--$8.8$
while $\lmin$ falls by more than two decades), and $\kappa/n$ decays from
$0.43$ to $0.013$ as $n$ grows from $25$ to $800$. Against a $10^{5}$-point
estimate of the population curvature, the per-feature leverage has median
$2.8$ and maximum below $10$ in every design, again flat in depth. The
experiments therefore sit squarely in the well-spread
regime---$\kappa=O(1)$, independent of the conditioning---in which the
theorem is informative. \Cref{sec:exp5,sec:exp6} report the same measurement
on the real data.

\begin{figure}[t]
\centering
\includegraphics[width=\textwidth]{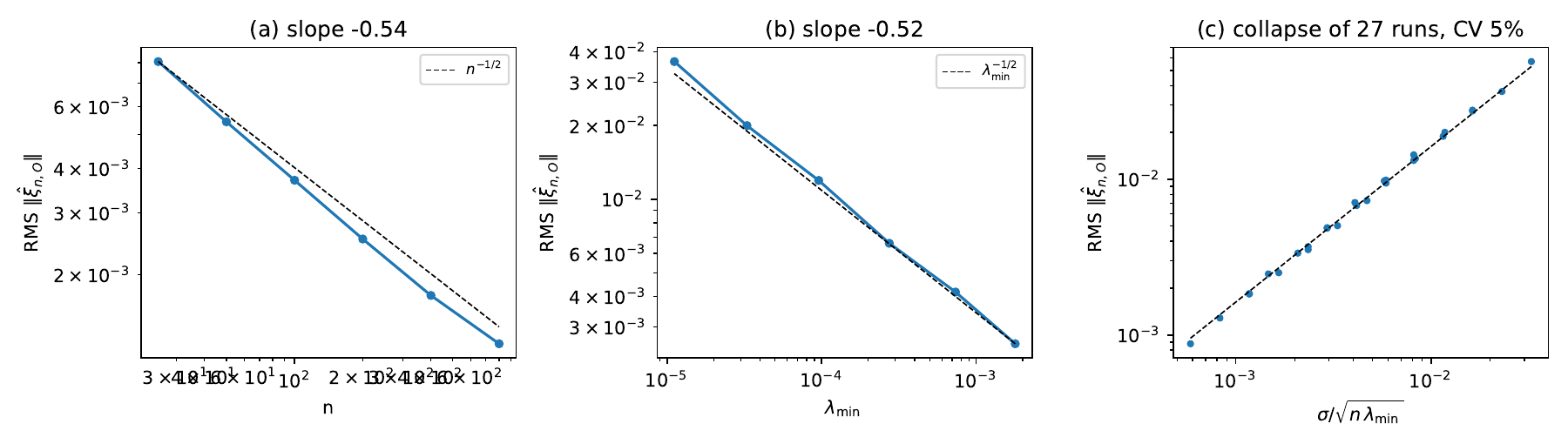}
\caption{Stability radius. (a) Error $\propto n^{-1/2}$. (b) Error
$\propto\lmin^{-1/2}$. (c) Collapse of all $(\sigma,n,\lmin)$ runs onto
$\Gnorm{\hat\xi_{n,O}}\sim\sigma/\sqrt{n\,\lmin}$ (CV $=5\%$).}
\label{fig:exp4}
\end{figure}

\subsection{Real-data study: TUM RGB-D dynamic sequences}
\label{sec:exp5}

We probe the gating question on five TUM RGB-D sequences~\cite{tum}---four
dynamic and one static control---(freiburg3: \texttt{sitting\_static},
\texttt{walking\_xyz},
\texttt{walking\_halfsphere}, \texttt{walking\_rpy}, \texttt{walking\_static}),
using a controlled frame-to-frame PnP protocol that isolates gating quality from
full-SLAM drift: ORB features of frame $t$ are back-projected with the ground-truth
pose of $t$ into map points, the pose of frame $t{+}3$ is estimated by Gauss--Newton,
and all gated methods share an identical machinery---a common Huber-robust
initialization (the influence function is defined at the estimator's solution,
\cref{thm:if}) followed by iterative removal of the worst $q\%$ of features, the
\emph{only} difference being the ranking criterion: \emph{residual gating} ranks
by the whitened residual $r^{\top}Wr$ (the chi-square statistic of
\cref{prop:student}), \emph{score gating} by the raw score norm $\norm{\psi(z)}$,
and \emph{\GOI{} gating} by $\GOI(z)$; unranked Huber IRLS and a RANSAC reference
complete the baselines. The trimming fraction is
selected per method on \texttt{walking\_xyz} and held fixed elsewhere;
\texttt{walking\_xyz} is therefore a tuning sequence, reported for
completeness, and the held-out comparisons are the remaining four sequences.
\Cref{tab:tum} reports translational errors; rotational medians
($0.12$--$0.48^{\circ}$) order identically.

\begin{table}[t]
\centering\small
\caption{TUM RGB-D, controlled protocol: median\,/\,RMSE translational error (m).}
\label{tab:tum}
\begin{tabular}{lccccc}
\toprule
method & sit\_static & walk\_half & walk\_rpy & walk\_static & walk\_xyz\\
\midrule
none            & 0.022/0.042 & 0.177/0.641 & 0.187/0.952 & 0.054/0.196 & 0.168/0.489\\
Huber IRLS      & 0.005/0.008 & 0.020/0.118 & 0.017/0.207 & 0.007/0.014 & 0.016/0.110\\
RANSAC (ref.)   & 0.005/0.008 & 0.016/0.071 & 0.014/0.051 & 0.006/0.010 & 0.013/0.027\\
residual gating & 0.005/0.008 & 0.016/0.133 & 0.014/0.036 & 0.006/0.009 & 0.013/0.071\\
score gating    & 0.005/0.008 & 0.016/0.190 & 0.013/0.040 & 0.006/0.009 & 0.012/0.037\\
\GOI{} gating   & 0.005/0.008 & 0.017/0.103 & 0.013/0.041 & 0.006/0.009 & 0.013/0.027\\
\bottomrule
\end{tabular}
\end{table}

The result is \emph{parity}: per-frame paired comparisons between \GOI{} and
residual gating show win rates of $43$--$58\%$ with Wilcoxon $p>0.1$ on the
principal dynamic sequences, and the apparent RMSE gaps trace to one or two
individual frames rather than a systematic tail effect. \GOI{} gating incurs no
cost on the static control ($p=0.001$ in its favor, but at the $10^{-4}$\,m scale).
This is the outcome the theory predicts: the indoor fr3 geometry is well
conditioned (per-frame $\mathrm{cond}(\Cv)\approx10^{2}$), and with a flat
spectrum $\GOI(z)\propto\norm{\psi(z)}=\norm{J^{\top}Wr}$
(\cref{prop:spectral}), which differs from the whitened residual only through the
per-feature factor $JJ^{\top}$; at the bounded field of view and narrow depth
range of these scenes that factor is nearly uniform across features, so the two
rankings almost coincide. A dose--response
probe---restricting the field of view to raise conditioning---moves
$\mathrm{cond}(\Cv)$ from $98$ to $890$ with no detectable gap opening (win rate
$46$--$55\%$, flat); the synthetic divergence regime
($\mathrm{cond}\approx2.5\times10^{4}$, \cref{sec:exp1}) is not reachable by this
intervention on these scenes. On well-conditioned scenes, then, influence and residual gating coincide, as
\cref{prop:spectral} predicts; the weakly observable regime, where they must
part ways, is probed on KITTI next---where \cref{prop:student} predicts which
statistic prevails when the weak directions are noise-dominated.

\subsection{KITTI: weak observability and the leverage effect}
\label{sec:exp6}

KITTI odometry street scenes sit in the weakly observable regime the theory
targets: with stereo depths of $20$--$80$\,m, the per-frame conditioning
(computed in the camera-anchored chart of \cref{rem:gauge}) is
$\mathrm{cond}(\Cv)\approx10^{4}$---two orders beyond TUM, at the level of the
synthetic divergence regime of \cref{sec:exp1}. Under the identical protocol of
\cref{sec:exp5} on sequences 04 (highway, 135 frame pairs) and 07 (urban, 550
pairs), raw-\GOI{} gating is \emph{significantly worse} than residual gating:
median translational error $0.087$ vs.\ $0.065$\,m and P90 $0.24$ vs.\
$0.14$\,m on 04, and $0.025$ vs.\ $0.023$\,m on 07, with per-frame win rates of
$30\%$ and $42\%$ (Wilcoxon $p<10^{-3}$). The deficit is largest on the more
weakly observable sequence (04); within-sequence stratification by per-frame
conditioning is statistically flat, so the effect resolves at the
sequence-geometry level rather than frame by frame. This is the outcome
\cref{prop:student,rem:leverage} predict when the weak directions are
noise-dominated: the raw index is then governed by leverage---it is expected to
flag far, weak-direction-aligned \emph{inliers}, whose information is scarce and
whose noise (stereo depth error grows as $Z^{2}$) loads exactly those
directions---and trimming them de-informs the weak direction. (The released pipeline directly
supports a per-feature audit of the removed measurements' depth and
weak-direction alignment.) We stress that this mechanism involves an effect outside the measurement
model of \cref{sec:prelim}, which places noise on the image measurement $z$
only (A3): the map points $X$ of the KITTI protocol come from stereo and carry
depth error that grows as $Z^{2}$. The KITTI outcome is therefore consistent
with the prediction of \cref{prop:student} but is not a test of the model in
isolation; propagating map-point uncertainty into an effective measurement
covariance would make the two coincide and is left for future work. With that
caveat, the three testbeds are explained by one mechanism. With
bias-dominated contamination and controlled magnitudes (synthetic,
\cref{sec:exp1}), raw influence separates harm. With a flat spectrum (TUM,
\cref{sec:exp5}), influence and residual coincide. With noise-loaded weak
directions (KITTI), the standardized statistic---the chi-square test---prevails,
as \cref{prop:student} prescribes.

\paragraph{Assumption (A4) on the real data.}
The leverage constant is measured on every frame of both real-data studies,
in the same chart and at the same evaluation points as $\lmin$. Across the
$1{,}296$ TUM frame pairs the per-frame $\kappa$ has median $13$--$15$ and
90th percentile $18$--$33$ per sequence---the same order as the synthetic
designs---and across the $685$ KITTI frame pairs median $35$--$46$ with
$\kappa/n$ at median $0.05$--$0.09$. On all three testbeds $\kappa$ is
uncorrelated with the conditioning (log-correlation $0.00$ on KITTI~04,
$0.25$ on 07): leverage and conditioning are independent axes, which is why
(A4) is a separate hypothesis. The heavy tails have a single, audited cause,
the nearest points in the frame: the top-leverage feature of the worst TUM
frame is the single closest point (depth $0.9$\,m against a frame median of
$2.4$\,m), and the largest KITTI values occur for the \emph{untrimmed} Huber
baseline evaluated at the advanced pose, where forward motion brings
map points near the lower depth cutoff---both consistent with the $1/Z^{2}$
scaling of per-feature translational information (\cref{rem:parallax}). The
gated survivor sets that the estimator actually uses are better spread than
the raw matches (kept-set medians $28$--$31$ on KITTI vs.\ frame-level
$35$--$46$), so \cref{thm:stability} applies to the operative feature sets,
while individual untrimmed frames touching $\kappa/n\approx0.9$ mark the
boundary of its regime---consistent with the sample-size condition of the
theorem.

\section{Discussion and limitations}
\label{sec:discussion}

The synthetic experiments check numerically the identities the theory
states---the influence-function equality (\cref{thm:if}) and the
Fisher--curvature identity (\cref{thm:fisher})---and test the scaling laws it
predicts---the quadratic translational collapse and the stability law
(\cref{thm:stability})---while the real-data studies test the practical
consequences. They are not a benchmark of a complete SLAM system, and
the practical content of the framework is a division of roles, not a better
detector: \GOI{} is the exact measure of a measurement's \emph{effect} on the
estimate (attribution, auditing, sensitivity), while the correctly standardized
statistic for outlier \emph{testing} is the classical chi-square residual
(\cref{prop:student}), whose robustness the framework explains and whose failure
modes it delimits. The metric $G$
enters only through the curvature operator $\Aop=G^{-1}\Cv$ and the whitening isometry
of \cref{lem:projector}; the experiments use $G=I_{6}$, and other choices (e.g.\
block-diagonal weights balancing rotational and translational units) change the index
quantitatively but leave the theory intact. Three limitations follow from this and
from the assumptions. First, with $G=I_{6}$ the spectrum mixes radians and
metres, so $\lmin$, $\mathrm{cond}(\Cv)$, the threshold $\tau_{\lambda}$ of
(D2), and the comparison of translational with rotational eigenvalues in
\cref{fig:exp2}(b) are statements relative to this unit choice (rescaling
translation to millimetres changes $\lmin$ by $10^{-6}$); the theorems are
covariant in $G$, and \cref{thm:fisher} gives the numbers their meaning as
Cram\'er--Rao variances in the chosen units, but ``weakly observable'' is a
metric-relative judgement. Second, \cref{thm:fisher} identifies $\Cv$ with the
Fisher information for the Gaussian score, whereas the real-data estimators are
Huber-initialized; \cref{thm:if,prop:student} apply to any M-estimator score,
but the Cram\'er--Rao reading of \GOI{} is specific to the least-squares score
at $g^{*}$. Third, the model places noise on the image measurement only; the
map-point uncertainty that drives the KITTI result (\cref{sec:exp6}) is
outside (A3). Finally, the name ``observability index'' is retained for
continuity with the diagnostics of \cref{sec:diagnostics}; the quantity itself is
a per-measurement influence norm---the $\SE$ counterpart of Cook's
distance---whose connection to observability is through $\lmin$ in
\cref{prop:spectral}. The real-data studies of \cref{sec:exp5,sec:exp6} close the loop: parity on
well-conditioned scenes, leverage-driven degradation of raw-influence gating
under weak observability, and the chi-square test recovered as the standardized
influence, with the model-mismatch caveat of \cref{sec:exp6}. A natural next step is the integration of the diagnostics into a
complete SLAM back end.
The framework applies unchanged to two-view essential-matrix refinement following
an eight-point initialization~\cite{hz}: taking $r$ to be the Sampson error makes
the refinement a Gauss--Newton M-estimator to which
\cref{thm:if,thm:fisher,prop:student} apply verbatim, and the classical
pure-rotation and planar degeneracies then appear as rank deficiencies of $\Cv$,
i.e.\ as collapse of $\lmin$; a quantitative two-view study is left to future
work. The framework assumes a calibrated central-projection model and
left-trivialized perturbations; extension to rolling-shutter and multi-camera
rigs requires re-deriving $J$ but leaves the operator theory intact.

\section{Conclusion}
\label{sec:conclusion}

We presented the Geometric Observability Index, a single per-feature quantity---the
$G$-norm of a measurement's influence on the estimated pose---that unifies influence
functions, Fisher information, conditioning, and stability through the spectral
geometry of the curvature operator on $\SE$, and that, once standardized by its
inlier null covariance, collapses exactly to the classical chi-square residual test.
Each identity is checked numerically and each practical prediction is tested on
real TUM and KITTI sequences; in particular, the framework explains why residual
gating behaves as it does---robust to residual-visible contamination, blind to
high-leverage masking---and where raw influence misleads. We hope the
released implementation makes the attribution and degeneracy diagnostics easy to
adopt in existing Gauss--Newton estimators.

\section*{Reproducibility}
\label{sec:repro}
\addcontentsline{toc}{section}{Reproducibility}
All experiments are produced by a self-contained Python implementation, released at
\url{https://anonymous.4open.science/r/goi-se3}\footnote{Anonymized repository during concurrent peer review; a public repository will be linked in a future version.}: \texttt{code/} contains the core library
and the synthetic experiments (NumPy/SciPy/Matplotlib only; \texttt{run\_all.sh}
regenerates every figure and number of
\cref{sec:exp1,sec:exp2,sec:exp3,sec:exp4} in about three minutes), and
\texttt{tum/} and \texttt{kitti/} contain the real-data pipelines of
\cref{sec:exp5,sec:exp6}. The analytic Jacobian and the Fisher--curvature identity
are checked by an included test (\texttt{tests.py}); random seeds are fixed
throughout, so all reported numbers reproduce bit-for-bit.

\section*{Acknowledgment of AI usage}
\addcontentsline{toc}{section}{Acknowledgment of AI usage}
An AI assistant was used to help check mathematical consistency and edit prose.
All theoretical claims and proofs, the implementation, the numerical validation,
and the final text were verified by the authors, who take full responsibility
for the content.

\end{document}